\documentclass{article}

\usepackage[numbers,sort&compress]{natbib}
\usepackage[preprint]{neurips_2025}

% to compile a preprint version, e.g., for submission to arXiv, add add the
% [preprint] option:
%     \usepackage[preprint]{neurips_2025}

% to compile a camera-ready version, add the [final] option, e.g.:
%     \usepackage[final]{neurips_2025}

% to avoid loading the natbib package, add option nonatbib:
%    \usepackage[nonatbib]{neurips_2025}

\usepackage{longtable} 

\usepackage[hidelinks]{hyperref}  
 \usepackage{booktabs} 
\usepackage{listings}
\usepackage{geometry}
\usepackage[table]{xcolor}
\usepackage{wrapfig}
\usepackage{amsmath, amssymb, amsthm}
\newtheorem{theorem}{Theorem}

\usepackage[most,skins,theorems]{tcolorbox}
\tcbset{
  aibox/.style={
    width=\linewidth,
    top=8pt,
    bottom=4pt,
    colback=blue!6!white,
    colframe=black,
    colbacktitle=black,
    enhanced,
    center,
    attach boxed title to top left={yshift=-0.1in,xshift=0.15in},
    boxed title style={boxrule=0pt,colframe=white,},
  }
}
\newtcolorbox{AIbox}[2][]{aibox,title=#2,#1}
\usepackage{amsmath} % For \text, \binom, \mathbb, \begin{cases}
\usepackage{amssymb} % For \mathbb
\usepackage{multirow}

\usepackage{rotating}  % in your preamble

\tcbset{
    promptstyle/.style={
        enhanced,
        colback=white,
        colframe=black,
        colbacktitle=gray!20,
        width=0.98\linewidth, 
        coltitle=black,
        rounded corners,
        sharp corners=north,
        boxrule=0.5pt,
        drop shadow=black!50!white,
        attach boxed title to top left={
            xshift=-2mm,
            yshift=-2mm
        },
        title code={%
            \begin{minipage}{0.8\linewidth}
                \centering\thetitle
            \end{minipage}
        },
        boxed title style={
            rounded corners,
            size=small,
            colback=gray!20,
        },
        fonttitle=\normalsize\bfseries,
        before upper={\parindent15pt}
    }
}

\usepackage{ifthen} 
\newcommand{\coloredAvg}[1]{%
  \pgfmathsetmacro{\val}{#1}%
  \pgfmathsetmacro{\absval}{abs(\val)}%
  \pgfmathsetmacro{\perc}{min(100,100*\absval/3)}%
  \ifthenelse{\lengthtest{\val pt > 0pt}}%
    {\cellcolor{red!\perc}#1}%
    {\ifthenelse{\lengthtest{\val pt < 0pt}}%
      {\cellcolor{green!\perc}#1}%
      {\cellcolor{white}#1}}%
}

\tcbset{
    userstyle/.style={
        enhanced,
        colback=white,
        colframe=black,
        colbacktitle=gray!20,
        coltitle=black,
        rounded corners,
        sharp corners=north,
        boxrule=0.5pt,
        drop shadow=black!50!white,
        attach boxed title to top left={
            xshift=-2mm,
            yshift=-2mm
        },
        boxed title style={
            rounded corners,
            size=small,
            colback=gray!20
        }
    },
    replystyleg/.style={
        enhanced,
        colback=green!15,
        colframe=black,
        colbacktitle=green!30,
        coltitle=black,
        boxrule=0.5pt,
        drop shadow=black!50!white,
        rounded corners,
        sharp corners=north,
        attach boxed title to top right={
            xshift=-2mm,
            yshift=-2mm
        },
        boxed title style={
            rounded corners,
            size=small,
            colback=green!40
        }
    },
    replystyler/.style={
        enhanced,
        colback=red!15,
        colframe=black,
        colbacktitle=red!40,
        coltitle=black,
        boxrule=0.5pt,
        drop shadow=black!50!white,
        rounded corners,
        sharp corners=north,
        attach boxed title to top right={
            xshift=-2mm,
            yshift=-2mm
        },
        boxed title style={
            rounded corners,
            size=small,
            colback=red!40
        }
    }
}

\newtcolorbox{prompt}[2][]{
    colback=white,
    colframe=gray!45,
    fonttitle=\bfseries,
    coltitle=black,
    sharp corners,
    title=#2,
    #1
}

\tcbset{
    promptstyle/.style={
        enhanced,
        colback=white,
        colframe=black,
        colbacktitle=gray!20,
        coltitle=black,
        rounded corners,
        sharp corners=north,
        boxrule=0.5pt,
        drop shadow=black!50!white,
        attach boxed title to top left={
            xshift=-2mm,
            yshift=-2mm
        },
        boxed title style={
            rounded corners,
            size=small,
            colback=gray!20
        }
    },
    replystyleg/.style={
        enhanced,
        colback=green!15,
        colframe=black,
        colbacktitle=green!30,
        coltitle=black,
        boxrule=0.5pt,
        drop shadow=black!50!white,
        rounded corners,
        sharp corners=north,
        attach boxed title to top right={
            xshift=-2mm,
            yshift=-2mm
        },
        boxed title style={
            rounded corners,
            size=small,
            colback=green!40
        }
    },
    replystyler/.style={
        enhanced,
        colback=red!15,
        colframe=black,
        colbacktitle=red!40,
        coltitle=black,
        boxrule=0.5pt,
        drop shadow=black!50!white,
        rounded corners,
        sharp corners=north,
        attach boxed title to top right={
            xshift=-2mm,
            yshift=-2mm
        },
        boxed title style={
            rounded corners,
            size=small,
            colback=red!40
        }
    }
}

\geometry{a4paper, margin=1in}
\renewcommand{\paragraph}[1]{\noindent \textbf{#1}}

\hypersetup{
  colorlinks,
  citecolor=blue!70,
  linkcolor=blue!70,
  urlcolor=blue!70
}

\usepackage[most]{tcolorbox}
\usepackage[scaled=1]{helvet}
\usepackage{lipsum} % For dummy text. Can be removed.

\tcbset{
    promptstyle/.style={
        enhanced,
        colback=white,
        colframe=black,
        colbacktitle=gray!20,
        coltitle=black,
        rounded corners,
        sharp corners=north,
        boxrule=0.5pt,
        drop shadow=black!50!white,
        attach boxed title to top left={
            xshift=-2mm,
            yshift=-2mm
        },
        boxed title style={
            rounded corners,
            size=small,
            colback=gray!20
        }
    },
    replystyleg/.style={
        enhanced,
        colback=green!15,
        colframe=black,
        colbacktitle=green!30,
        coltitle=black,
        boxrule=0.5pt,
        drop shadow=black!50!white,
        rounded corners,
        sharp corners=north,
        attach boxed title to top right={
            xshift=-2mm,
            yshift=-2mm
        },
        boxed title style={
            rounded corners,
            size=small,
            colback=green!40
        }
    },
    replystyler/.style={
        enhanced,
        colback=red!15,
        colframe=black,
        colbacktitle=red!40,
        coltitle=black,
        boxrule=0.5pt,
        drop shadow=black!50!white,
        rounded corners,
        sharp corners=north,
        attach boxed title to top right={
            xshift=-2mm,
            yshift=-2mm
        },
        boxed title style={
            rounded corners,
            size=small,
            colback=red!40
        }
    }
}
\newtcolorbox{promptbox}[1][]{
    promptstyle,
    title=Prompt,
    #1
}

\title{Temporal Sampling for Forgotten Reasoning in LLMs}

% The \author macro works with any number of authors. There are two commands
% used to separate the names and addresses of multiple authors: \And and \AND.
%
% Using \And between authors leaves it to LaTeX to determine where to break the
% lines. Using \AND forces a line break at that point. So, if LaTeX puts 3 of 4
% authors names on the first line, and the last on the second line, try using
% \AND instead of \And before the third author name.

\author{
\textbf{Yuetai Li}\textsuperscript{$\clubsuit$}\textsuperscript{*} \;\;\;  
\textbf{Zhangchen Xu}\textsuperscript{$\clubsuit$}\textsuperscript{*} \;\;\;  
\textbf{Fengqing Jiang}\textsuperscript{$\clubsuit$} \;\;\;
\textbf{Bhaskar Ramasubramanian}\textsuperscript{$\spadesuit$} \; \;\; \\ \vspace{0.4em}
\textbf{Luyao Niu}\textsuperscript{$\clubsuit$} \;\;\; 
\textbf{Bill Yuchen Lin}\textsuperscript{$\clubsuit$} \;\;\;
\textbf{Xiang Yue}\textsuperscript{$\diamondsuit$} \;\;\; 
\textbf{Radha Poovendran}\textsuperscript{$\clubsuit$} \\ \vspace{0.4em}
  \textsuperscript{$\clubsuit$}University of Washington \; 
  \textsuperscript{$\diamondsuit$}Carnegie Mellon University \;
  \textsuperscript{$\spadesuit$}Western Washington University \\ \vspace{0.4em}
    \textbf{Project Page}: \url{https://temporal-forgetting.github.io/Temporal_Forgetting/}   \\ \vspace{0.2em}
    \textbf{Github}: \url{https://github.com/uw-nsl/Temporal_Forgetting} 
}

\begin{document}

\maketitle
% \footnotetext{\textsuperscript{*}These authors contributed equally to this work. \textsuperscript{$\ddagger$}Equal advising.}
\footnotetext{\textsuperscript{*}These authors contributed equally to this work.}
\vspace{-1em}

\begin{abstract}
Fine-tuning large language models (LLMs) is intended to improve their reasoning capabilities, yet we uncover a counterintuitive effect: models often forget how to solve problems they previously answered correctly during training. We term this phenomenon {\textit{Temporal Forgetting}} and show that it is widespread across model sizes, fine-tuning methods (both Reinforcement Learning and Supervised Fine-Tuning), and multiple reasoning benchmarks. Our analysis reveals that 6.4\% to 56.1\% of final errors were once solved correctly at an earlier checkpoint.  Inspired by the phenomenon of Temporal Forgetting, we proposed \textit{Temporal Sampling}, a simple decoding strategy that draws outputs from multiple checkpoints along the training trajectory. This approach recovers forgotten solutions without retraining or ensembling, and leads to significant improvements in reasoning performance, gains from 4 to 19 points in Pass@$k$ and consistent gains for majority-voting and Best-of-N across several benchmarks. To make Temporal Sampling deployment-friendly, we extend it to LoRA-adapted models. By leveraging the temporal diversity inherent in training, Temporal Sampling offers a practical, compute-efficient way to surface hidden reasoning ability and rethink how we evaluate LLMs.
\end{abstract}

\begin{figure}[h!]
  \centering
  \vspace{-1em}
  \includegraphics[width=0.9\textwidth]{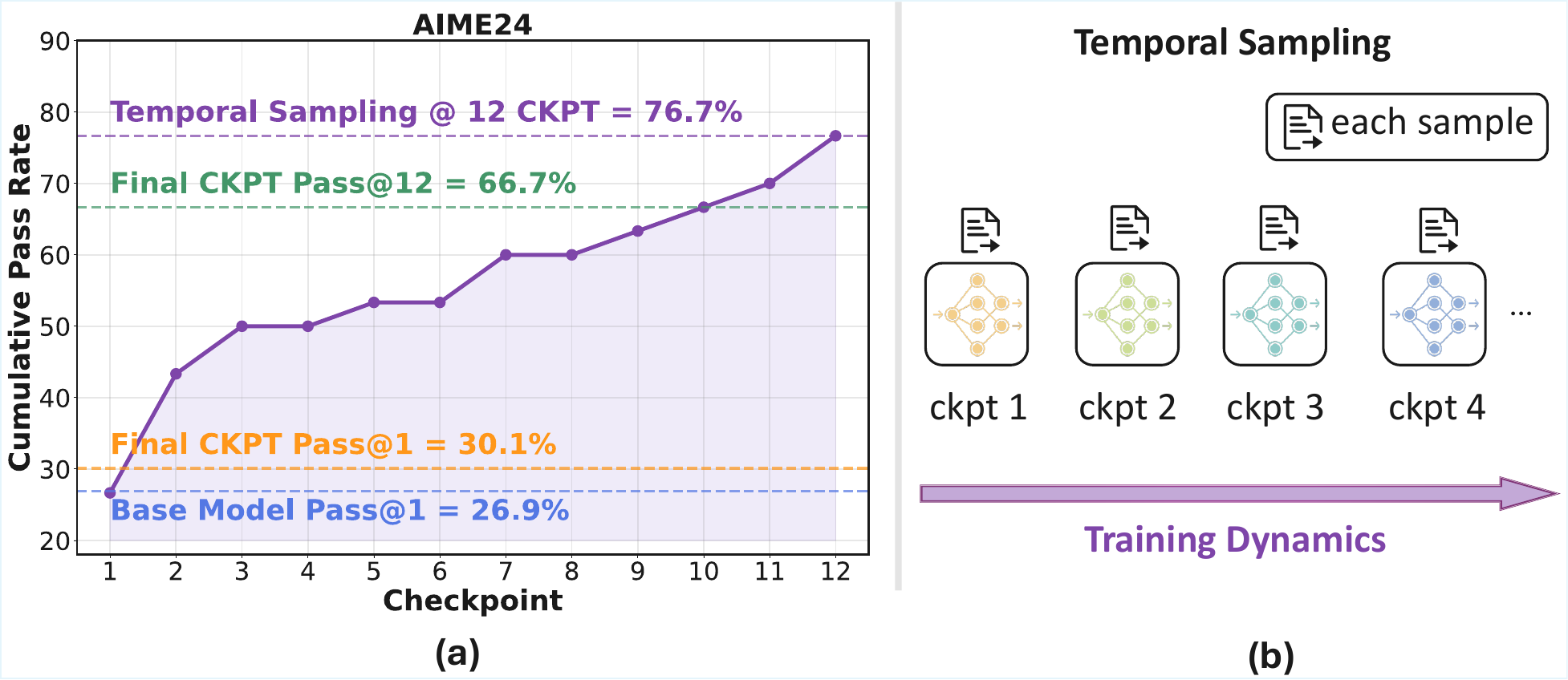}
    \vspace{-5pt}
  \caption{(a) We observed that during RL training process of Deepseek-R1-1.5B model, 76.7\% of AIME problems were solved correctly at \textit{some} intermediate checkpoint, yet only 30\% remained correct in the \textit{final} model. We term this phenomenon as \textbf{Temporal Forgetting}. (b) We proposed \textbf{Temporal Sampling}: This method utilizes training dynamics as a source of answer diversity by distributing inference samples across multiple distinct checkpoints from the training trajectory, rather than relying solely on the single final checkpoint.}
  \label{fig:teaser}
  \vspace{-1em}
\end{figure}

\section{Introduction}
Fine-tuning large language models (LLMs) is expected to improve their reasoning ability \cite{deepscaler2025,DeepSeek-R1,SimpleRL,muennighoff2025s1simpletesttimescaling,sky_t1_2025, jin2025disentanglingmemoryreasoningability}. Yet, we uncover a surprising phenomenon: \textit{models often forget how to solve problems they previously solved correctly during fine-tuning}. We refer to this systematic behavior as \textit{Temporal Forgetting}.

Temporal Forgetting is not rare or model-specific. To quantify this phenomenon, we introduce a new metric: the Temporal Forgetting Score ($P_{TFS}$). $P_{TFS}$ captures the percentage of questions in the benchmark that were answered correctly by \textit{some} checkpoint during RL/ SFT but were ultimately answered incorrectly by the \textit{final} checkpoint. Across Supervised Fine-Tuning (SFT) and Reinforcement Learning (RL) fine-tuning \cite{DeepSeekMath,DeepSeek-R1,SimpleRL} of Qwen2.5 models (1.5B and 7B) on multiple reasoning benchmarks (AIME, AMC, OlympiadBench~\cite{he2024olympiadbenchchallengingbenchmarkpromoting}, MATH-500 \cite{hendrycks2021math}, GPQA \cite{rein2024gpqa}), we find that from 6.4\% to 56.1\% of final errors were once solved correctly at an earlier checkpoint. This pattern persists across different model sizes, architectures, and training approaches.

This metrics highlight a fundamental limitation in current evaluation methodologies. Standard metrics like Pass@$k$ \cite{chen2021evaluating} and Majority@$k$ \cite{Self-Consistency}, computed only on the final model, implicitly assume that checkpoint to be the model's most capable state. However, our findings reveal that many correct reasoning paths are transient, making final-checkpoint-only evaluation a narrow and often misleading lens. The significant Temporal Forgetting Score suggests that the reasoning potential of fine-tuned models are substantially underestimated when using only the final checkpoint.

Inspired by this, we proposed \textit{Temporal Sampling}, a simple decoding strategy that samples completions across multiple checkpoints rather than just the final one, which is shown in Figure \ref{fig:teaser} (b). By spreading the sample budget across time, Temporal Sampling recovers forgotten solutions without retraining or ensembling.

Temporal Sampling yields substantial improvements across diverse reasoning tasks. On benchmarks such as AIME2024, AMC, and AIME2025, we observe gains from 4 to 19 points in Pass@$k$ compared to final-checkpoint-only sampling, and consistent improvements in Majority@$k$ and Best-of-N. To make Temporal Sampling deployment-friendly, we extend it to LoRA-adapted models \cite{hu2021loralowrankadaptationlarge}. This reduces storage requirements, enabling efficient use of Temporal Sampling in storage-resource constrained settings.

These findings suggest that true model competence may not reside in a single parameter snapshot, but rather in the collective dynamics of training itself. Temporal Sampling offers a practical and powerful way to reclaim lost reasoning ability, challenging the standard paradigm of using only the final model checkpoint for evaluation and deployment.

\begin{figure}[!t]
    \centering
    \includegraphics[width=\textwidth]{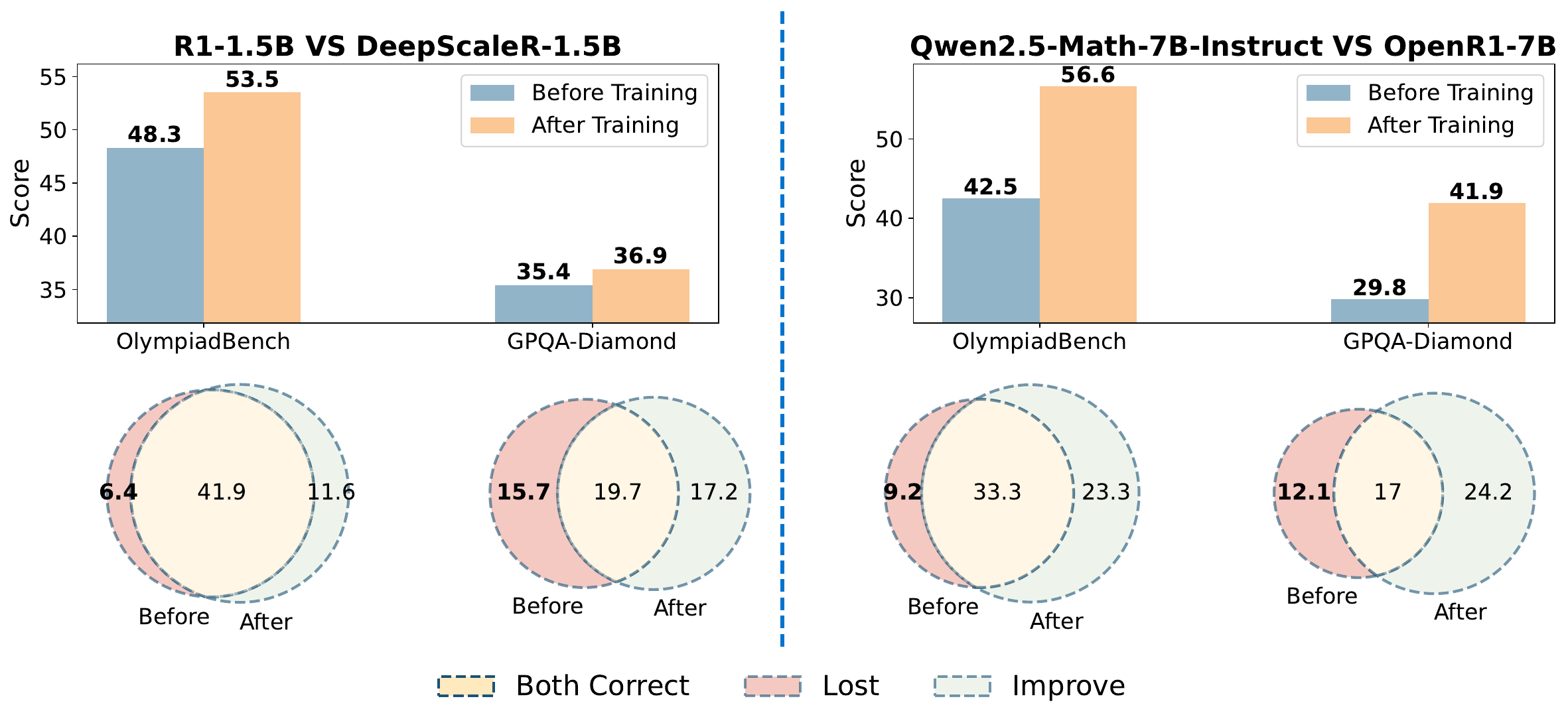}
    \caption{Fine-tuned models like DeepscaleR-1.5B \cite{deepscaler2025} and OpenR1-7B \cite{openr1} outperform the base model overall but also forget many questions the base model answered correctly.}
    \label{fig:Implicit Forgetting} 
    \vspace{-1em}
\end{figure}

\section{Temporal Forgetting: Correct Answers Emerge and Vanish in Training}

\subsection{Overall Performance Score cannot Tell Everything
}

To understand how RL or SFT alters a model's ability to correctly answer reasoning problems, we investigate instances where base models succeeded on questions but failed after fine-tuning. To quantify this, we introduce the \textbf{Lost Score}:
\begin{itemize}
  \item $P_{\text{Lost}}$ (\textbf{Lost Score}): The percentage of questions in a benchmark that were answered correctly by the base model but incorrectly by the model after fine-tuning.
\end{itemize}
This score specifically highlights the phenomenon where a model, despite any overall performance changes after fine-tuning, loses its correctness on certain problems it previously solved correctly. Note that overall performance scores cannot capture the statistical pattern reflected by $P_{Lost}$.

\paragraph{Experiment Setup.}
We consider various existing SOTA model such as DeepScaleR-1.5B \cite{deepscaler2025}, OpenR1-7B \cite{openr1} and S1-32B \cite{muennighoff2025s1simpletesttimescaling}. Please see Appendix  \ref{More Results of Implicit Forgetting} for the full list of evaluated models and their base models. We calculate the overall performance of various SOTA models after fine-tuning (denoted $P_{\text{FT}}$), the performance of their corresponding base model (denoted $P_{\text{Base}}$), and our proposed Lost Score ($P_{\text{Lost}}$). 
These evaluations were conducted on the OlympiadBench \cite{he2024olympiadbenchchallengingbenchmarkpromoting}, MATH-500 \cite{hendrycks2021measuringmathematicalproblemsolving}, and GPQA \cite{rein2024gpqa} benchmarks. We excluded AIME2024 and AMC2023 from this particular analysis because the number of questions available in these datasets was insufficient for a meaningful comparison. To minimize variability arising from different sampling methods during evaluation, we employ greedy sampling following \cite{wei2022chain}.

\begin{table}[ht]
\centering
\small
\setlength{\tabcolsep}{3.5pt}  % Reduce horizontal spacing
\renewcommand{\arraystretch}{1.2}  % Increase vertical spacing
\caption{Performance of the base model ($P_{\text{Base}} \uparrow$), the fine-tuned model ($P_{\text{FT}} \uparrow$) and the Lost Score ($P_{Lost} \downarrow$) for different SOTA models. We observed that in spite of the improvement of overall performance, the average $P_{Lost}$ ranges from 6.1 to 16.0, which implies a high percentage of questions answered correctly by the base model is answered incorrectly after RL or SFT. }
\begin{tabular}{l*{10}{c}}
\toprule
\multirow{2}{*}{\textbf{Model}} 
  & \multicolumn{3}{c}{\textbf{OlympiadBench}} 
  & \multicolumn{3}{c}{\textbf{MATH-500}} 
  & \multicolumn{3}{c}{\textbf{GPQA-Diamond}} 
  & \multirow{2}{*}{\textbf{Avg.~$P_{\text{Lost}}$}} \\
\cmidrule(lr){2-4} \cmidrule(lr){5-7} \cmidrule(lr){8-10}
  & $P_{\text{Base}}$ & $P_{\text{FT}}$ & $P_{\text{Lost}}$
  & $P_{\text{Base}}$ & $P_{\text{FT}}$ & $P_{\text{Lost}}$
  & $P_{\text{Base}}$ & $P_{\text{FT}}$ & $P_{\text{Lost}}$ \\
\midrule
DeepScaleR-1.5B \cite{deepscaler2025}        
  & 48.3 & 53.5 & 6.4  
  & 82.0 & 89.8 & 2.4  
  & 35.4 & 36.9 & 15.7 
  & \cellcolor{teal!11}8.2  \\
Still-1.5B \cite{Slow_Thinking_with_LLMs_3_Preview}            
  & 48.3 & 48.4 & 8.6  
  & 82.0 & 83.8 & 5.0  
  & 35.4 & 34.8 & 17.2 
  & \cellcolor{teal!21}10.3 \\
S1.1-1.5B \cite{muennighoff2025s1simpletesttimescaling}             
  & 18.7 & 11.7 & 11.1 
  & 46.2 & 37.6 & 19.2 
  & 23.2 & 16.2 & 17.7 
  & \cellcolor{teal!50}16.0 \\
II-thought-1.5B \cite{2025iithought}       
  & 48.3 & 58.4 & 5.3  
  & 82.0 & 88.0 & 3.4  
  & 35.4 & 34.3 & 16.7 
  & \cellcolor{teal!12}8.5  \\
\midrule
S1.1-3B \cite{muennighoff2025s1simpletesttimescaling}               
  & 29.8 & 24.7 & 12.4 
  & 65.0 & 64.8 & 10.2 
  & 32.8 & 30.3 & 18.7 
  & \cellcolor{teal!39}13.8 \\
SmallThinker-3B        
  & 29.8 & 38.2 & 6.2  
  & 65.0 & 69.2 & 9.8  
  & 32.8 & 28.3 & 21.7 
  & \cellcolor{teal!33}12.6 \\
\midrule
S1.1-7B \cite{muennighoff2025s1simpletesttimescaling}               
  & 40.4 & 42.2 & 10.5 
  & 76.0 & 76.8 & 7.8  
  & 32.8 & 41.4 & 15.2 
  & \cellcolor{teal!26}11.2 \\
OpenR1-Qwen-7B \cite{openr1}        
  & 42.5 & 56.6 & 9.2  
  & 83.0 & 89.8 & 3.8  
  & 29.8 & 41.9 & 12.1 
  & \cellcolor{teal!12}8.4  \\
OpenThinker-7B \cite{openthoughts}        
  & 40.4 & 48.7 & 8.1  
  & 76.0 & 85.0 & 4.2  
  & 32.8 & 43.9 & 13.6 
  & \cellcolor{teal!13}8.6  \\
\midrule
S1-32B \cite{muennighoff2025s1simpletesttimescaling}                
  & 49.8 & 60.1 & 4.3  
  & 81.6 & 89.6 & 3.2  
  & 43.9 & 55.1 & 13.1 
  & \cellcolor{teal!5}6.9   \\
Sky-T1-32B-Preview \cite{sky_t1_2025}    
  & 49.8 & 58.4 & 4.6  
  & 81.6 & 88.2 & 3.0  
  & 43.9 & 53.0 & 11.1 
  & \cellcolor{teal!5}6.2   \\
Bespoke-Stratos-32B   
  & 49.8 & 54.2 & 7.1  
  & 81.6 & 89.2 & 3.0  
  & 43.9 & 57.6 & 8.1  
  & \cellcolor{teal!5}6.1   \\
OpenThinker-32B \cite{openthoughts}       
  & 49.8 & 61.2 & 8.0  
  & 81.6 & 91.4 & 2.8  
  & 43.9 & 59.1 & 11.1 
  & \cellcolor{teal!6}7.3   \\
\bottomrule
\end{tabular}
\label{tab:forgetting_comparison}
\vspace{5pt}
\end{table}

\paragraph{Results.} Figure \ref{fig:Implicit Forgetting} demonstrates that although OpenR1-7B improves OlympiadBench performance from 42.5 to 56.6, a notable percentage of questions ($P_{Lost}=9.2$) were correctly solved by the base model but incorrectly by the fine-tuned model. 
In Table \ref{tab:forgetting_comparison}, we present a comprehensive analysis of various SOTA models. We found that $P_{Lost}$ could range from 6.1 to 16.0 points, with the average of 9.5 points. This implies that there are a considerable number of questions answered correctly by the base model but incorrectly after RL or SFT, in spite of the improvement of overall performance. 
Additionally, we demonstrate more experiments results regarding different sampling methods for various SOTA models, detailed results of which are included in Appendix \ref{More Results of Implicit Forgetting}. 

\begin{AIbox}{Takeaway 1: Overall Performance Score Cannot Tell Everything}
In spite of the improvement of overall performance, a considerable percentage of questions (from 6.1\% to 16\%) answered correctly by the base model may be answered incorrectly after RL/SFT.
\end{AIbox}

\subsection{Temporal Forgetting}
\label{sec:Learning_and_Forgetting_Dynamics_During_Post-Training}
To investigate how answer correctness evolves during post-training, we conducted SFT and RL on various base models, evaluating checkpoints at different training steps. We introduce two metrics to quantify the temporal dynamics: the \textbf{Ever Correct Score} and the \textbf{Temporal Forgetting Score}:
\begin{itemize}
  \item $P_{ECS}$ (\textbf{Ever Correct Score}): The percentage of questions in the benchmark that were answered correctly by \textit{at least one} checkpoint saved during RL/SFT.
  \item $P_{TFS}$ (\textbf{Temporal Forgetting Score}): The percentage of questions in the benchmark that were answered correctly by \textit{some} checkpoint during RL/SFT but were ultimately answered incorrectly by the final checkpoint. Mathematically, $P_{TFS} = P_{ECS} - P_{FT}$, where $P_{FT}$ is the performance score of the fine-tuned model.
\end{itemize}

Furthermore, to characterize how answer correctness changes between consecutive checkpoints, we define specific events: an answer is considered to ``Forget" if it shifts from correct to incorrect, and ``Improve" if it transitions from incorrect to correct. If the answer's correctness status (either correct or incorrect) remains unchanged across two consecutive checkpoints, it is labeled as ``Both Correct/Wrong."

\paragraph{Experiment Setup.} We performed GRPO \cite{shao2024deepseekmath} on the Qwen2.5-7B, Qwen2.5-1.5B, and Qwen2.5-Math-7B models \cite{Qwen2.5, Qwen2.5-Math}. The training data consisted of 4k samples randomly selected from the DeepscaleR-40k dataset \cite{deepscaler2025}. Throughout the training of each model, we saved 8 checkpoints. We set the RL training parameters following \cite{deepscaler2025}, and detailed training script parameters can be found in Appendix \ref{Detailed Experiment Setup}. For SFT, we utilized the same DeepscaleR-4k sampled data. We then employed QwQ-Preview-32B \cite{qwen2024qwq} for rejection sampling to obtain correct responses \cite{dong2023raftrewardrankedfinetuning}, subsequently fine-tuning each model on this curated dataset. We evaluated the performance of various checkpoints from the training process on five benchmarks: AIME24, AMC, MATH-500, OlympiadBench, and GPQA-Diamond. To minimize variability caused by random fluctuations in model performance from diverse sampling, we employed greedy sampling following \cite{wei2022chain}.

\begin{figure}[!t]
    \centering
    \includegraphics[width=0.9\textwidth]{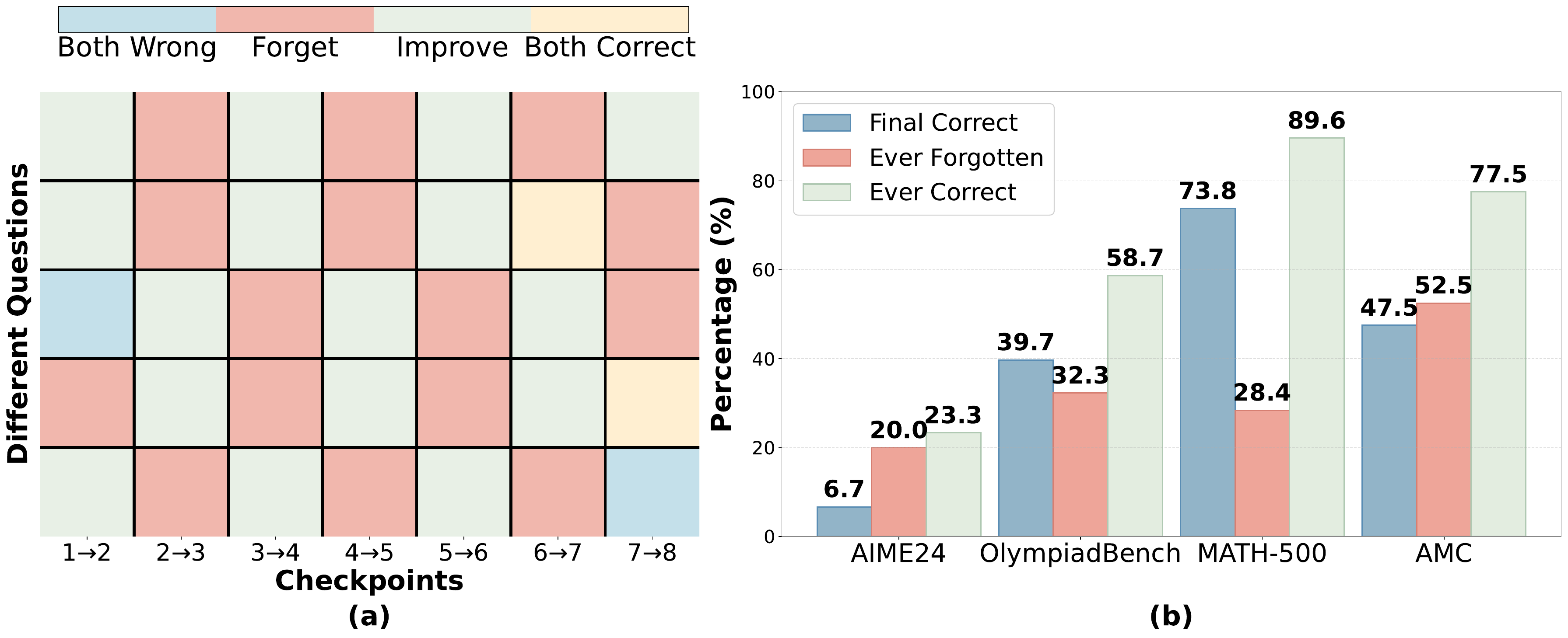}
    \caption{Forgetting dynamics of Qwen2.5-7B during RL training. 
    (a) Answer correctness trajectories for OlympiadBench questions across training checkpoints, illustrating solutions oscillate between correct and incorrect states. "Forget" implies that an answer was correct at the previous checkpoint but incorrect at the current one. Conversely, "Improve" implies that an answer that was incorrect at the previous checkpoint but correct at the current one. 
    (b) Percentage of questions per benchmark that are ever forgotten or ever correct at some checkpoint during GRPO.}
    \label{fig:combined_forget_dynamics_performance}
\end{figure}

\begin{table}[t]
\centering
\small
\caption{Performance of fine-tuned models ($P_{FT} \uparrow$), the Ever Correct Score ($P_{ECS} \uparrow$), and the Temporal Forgetting Score ($P_{TFS} \downarrow$) of different models after GRPO or SFT. We observed both high $P_{ECS}$ and $P_{TFS}$, which implies a high percentage of questions (from 6.4\% to 56.1\%) are answered correctly at some checkpoint during training but are ultimately incorrect in the final checkpoint. Please see the base model performance and more benchmark results in Appendix \ref{More Results of Forgetting Dynamics}.
}
\begin{tabular}{l@{\hspace{0.5em}}*{3}{c}@{\hspace{1em}}*{3}{c}@{\hspace{1em}}*{3}{c}@{\hspace{1em}}c}
\toprule
\multirow{2}{*}{\textbf{Model}} 
& \multicolumn{3}{c}{\textbf{OlympiadBench}} 
& \multicolumn{3}{c}{\textbf{MATH-500}} 
& \multicolumn{3}{c}{\textbf{GPQA-Diamond}} 
& \multirow{2}{*}{\textbf{Avg.~$P_{\text{TFS}}$}} \\
\cmidrule(lr){2-4} \cmidrule(lr){5-7} \cmidrule(lr){8-10}
& $P_{\text{FT}}$ & $P_{\text{ECS}}$ & $P_{\text{TFS}}$
& $P_{\text{FT}}$ & $P_{\text{ECS}}$ & $P_{\text{TFS}}$
& $P_{\text{FT}}$ & $P_{\text{ECS}}$ & $P_{\text{TFS}}$ & \\
\midrule
\rowcolor{gray!7} Qwen2.5-7B (GRPO) 
  & 39.7 & 58.7 & 19.0
  & 73.8 & 89.6 & 15.8
  & 33.8 & 74.7 & 40.9
  & \cellcolor{teal!20}25.2 \\  % (25.2-20.5)/(32.3-20.5)*50 ≈ 20
Qwen2.5-7B (SFT)
  & 40.1 & 55.8 & 15.7
  & 69.8 & 86.6 & 16.8
  & 25.3 & 81.4 & 56.1
  & \cellcolor{teal!38}29.5 \\  % (29.5-20.5)/(32.3-20.5)*50 ≈ 38
\midrule
\rowcolor{gray!7} Qwen2.5-1.5B (GRPO)
  & 18.8 & 36.1 & 17.3
  & 55.6 & 73.0 & 17.4
  & 26.8 & 72.3 & 45.5
  & \cellcolor{teal!26}26.7 \\  % (26.7-20.5)/(32.3-20.5)*50 ≈ 26
Qwen2.5-1.5B (SFT)
  & 11.0 & 26.0 & 15.0
  & 36.2 & 66.0 & 29.8
  & 13.1 & 65.1 & 52.0
  & \cellcolor{teal!50}32.3 \\  % (32.3-20.5)/(32.3-20.5)*50 = 50
\midrule
\rowcolor{gray!7} Qwen2.5-Math-7B (GRPO)
  & 41.0 & 57.3 & 16.3
  & 79.8 & 86.2 & 6.4
  & 32.8 & 71.7 & 38.9
  & \cellcolor{teal!5}20.5 \\   % (20.5-20.5)/(32.3-20.5)*50 = 0 → teal!5 最浅
Qwen2.5-Math-7B (SFT)
  & 43.9 & 62.9 & 19.0
  & 76.4 & 90.4 & 14.2
  & 30.8 & 79.8 & 49.0
  & \cellcolor{teal!29}27.4 \\  % (27.4-20.5)/(32.3-20.5)*50 ≈ 29
\bottomrule
\end{tabular}
\label{tab:plost_comparison}
\vspace{5pt}
\end{table}

\paragraph{Results.} In Figure \ref{fig:combined_forget_dynamics_performance} (a), we illustrate the correctness of answers to different OlympiadBench questions at various checkpoints during the RL training of Qwen2.5-7B. Figure \ref{fig:combined_forget_dynamics_performance} (a) demonstrates the phenomenon of \textbf{Forgetting Dynamics}: Questions exhibits alternating ``Improve" and ``Forget" events frequently during training, which means the model oscillates between correct and incorrect answers across checkpoints. In Figure \ref{fig:combined_forget_dynamics_performance} (b), we show the percentage of questions across different benchmarks that experienced the ``Forget" event could achieve up to 32.3\% in OlympiadBench and 52.5\% in AMC.

Table \ref{tab:plost_comparison} presents the Ever Correct Score $P_{ECS}$ and Temporal Forgetting Score $P_{TFS}$ of different models after RL or SFT. We observed that a substantial number of questions were correctly answered at some checkpoint during the training process but were answered incorrectly by the final checkpoint (measured by a significantly high $P_{TFS}$). Surprisingly, we found that $P_{TFS}$ ranges from 6.4\% to 56.1\%, with average as high as 25 points. This implies that, on average, up to 25\% of the questions in a benchmark were correctly solved by the model at some checkpoint during training but were incorrect in the final output. Please see Appendix \ref{More Results of Forgetting Dynamics} for base model performance and more benchmark results including AIME24 and AMC.

\begin{AIbox}{Takeaway 2: Temporal Forgetting}
Benchmark questions may oscillate between correct and incorrect states across checkpoints during RL/ SFT. A considerable percentage of questions (from 6.4\% to 56.1\%) are answered correctly at some checkpoint during training but are ultimately incorrect in the final checkpoint.
\end{AIbox}

In contrast to \textbf{Catastrophic Forgetting} \cite{luo2023empirical} where overall performance drops markedly, our observed \textbf{Temporal Forgetting} emphasizes more fine-grained changes in the answer correctness shift during training dynamics, in spite of the improvement of overall performance. Temporal Forgetting focuses on changes in correctness at the individual question level, rather than on a collective measure, thus cannot be directly captured by the overall performance score only.

\section{Temporal Sampling: Scaling Inference Compute over Checkpoints}

\subsection{Temporal Sampling}

Inspired by the observed learning and forgetting dynamics during model training, we propose \textbf{Temporal Sampling}. Temporal Sampling utilizes the evolving state of the model across different training checkpoints as a source of diversity for answer generation at inference time. Specifically, instead of relying solely on the final checkpoint, $k$ samples are generated by allocating the sampling budget across $t$ distinct training checkpoints according to a chosen distribution strategy.

Temporal Sampling typically selects the $t$ most recent available checkpoints, which are then ordered from latest (e.g., the final checkpoint) to the $t$-th latest.
While various methods can be employed to distribute the $k$ sampling attempts among these checkpoints, this paper primarily focuses on a round-robin allocation. In this approach, sampling commences with the latest checkpoint for the first sample, the next latest for the second, and so on, cycling through the ordered sequence. This  procedure defaults to conventional sampling (from only the final checkpoint) when $t=1$.

\begin{wrapfigure}{r}{0.5\textwidth}
  \centering
  \vspace{-2em}
  \includegraphics[width=0.5\textwidth]{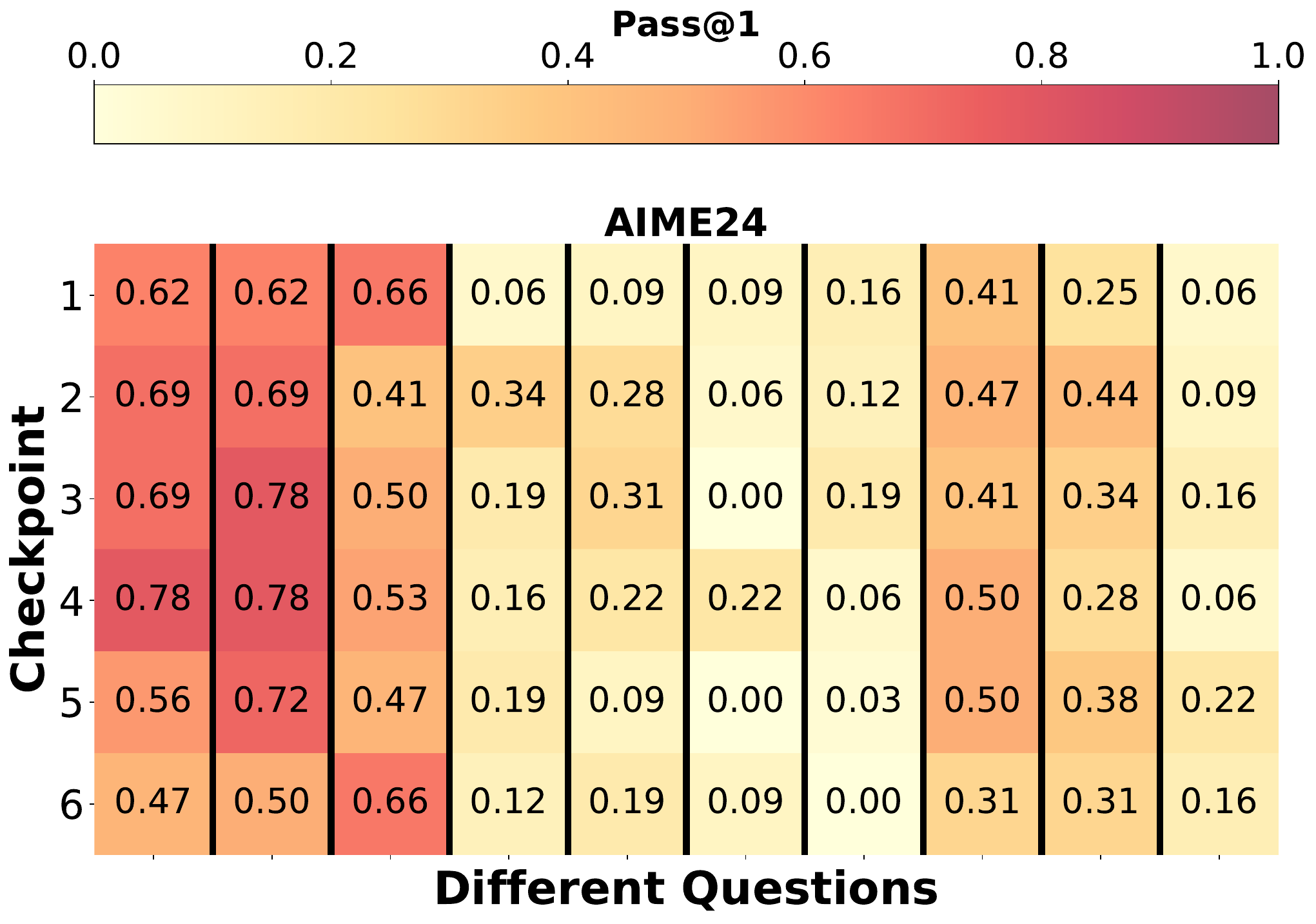}
  \vspace{-2em}
  \caption{Pass rate distribution across training checkpoints on AIME24. Individual problems show varying pass rates over time. Temporal Sampling exploits these dynamics to improve answer diversity at inference.}
  \vspace{-2em}
  \label{fig:heatmap_checkpoint_problems}
\end{wrapfigure}

\subsection{Metric $Pass@k|t$}
To better measure the performance of Temporal Sampling, we introduce a new metric, $Pass@k|t$. This metric is defined as the probability of obtaining at least one correct answer when $k$ samples are drawn from $t$ checkpoints. Although samples may be drawn in various ways, in what follows we adopt a round-robin manner: we first give the formal definition of $Pass@k|t$ under this distribution way and then derive the unbiased estimator.

\textbf{Definition.} Let $r_{i,j}$ denote the $Pass@1$ rate (i.e., the probability of correctness with a single sample) for the $j$-th checkpoint on the $i$-th problem. We define
\[Pass@k|t = \mathop{\mathbb{E}}_{i} \left\{1 - \prod_{j=1}^{t} (1-r_{i,j})^{k_j} \right\}\]
where $\sum_{j} k_j = k $ and $\{k_j\}$ is the \textit{Balanced Integer Partition} of $k$ on $t$ \cite{andrews2004integer}:
\[ k_j = \begin{cases} \lfloor k/t \rfloor + 1 & \text{if } j \le (k \pmod t) \\ \lfloor k/t \rfloor & \text{if } j > (k \pmod t) \end{cases} \]
Note that if $t=1$, this reduces to the standard definition of $Pass@k$ \cite{chen2021evaluating}. 

\textbf{Unbiased Estimation.}
We provide an unbiased estimator for $Pass@k|t$. Let $N$ be the total number of candidate samples generated for evaluation from each checkpoint $j$ on a problem $i$. Let $C_{i,j}$ be the number of correct samples among these $N$ candidates for problem $i$ from checkpoint $j$. 
The unbiased estimation can be expressed as:
\[Pass@k|t = \mathop{\mathbb{E}}_{i} \left\{ 1 - \prod_{j=1}^{t} \left( \frac{ \binom{N - C_{i,j}}{k_j} }{ \binom{N}{k_j} } \right) \right\}\]
The proof of this estimator's unbiased nature is provided in Appendix \ref{Proof of Unbiased Estimation}.

\subsection{Experiment Setup}

To evaluate the efficacy of Temporal Sampling, we conducted experiments on benchmarks including AIME2024, AMC2023, and AIME2025. We utilized GRPO to fine-tune the Qwen-7B-Base model on the DeepScaleR-4k dataset, following the training settings detailed in \cite{deepscaler2025}. For each problem, we generated 64 responses using diverse sampling with a temperature of 0.6, top-p of 0.95, and a maximum token length of 16384 \cite{yue2025does}.

We saved 8 checkpoints during the RL training phase, which constituted the checkpoint pool for our Temporal Sampling. As baselines, we considered the standard $Pass@k$ \cite{chen2021evaluating} and Maj$@k$ (self-consistency, also known as majority voting) \cite{wang2023selfconsistencyimproveschainthought}. For Maj$@k$, we followed the Majority Voting \cite{wang2023selfconsistencyimproveschainthought} by generating $k$ samples and selecting the most frequent answer as the final model output. We denote our Temporal Sampling variants as $Pass@k|t$ and Maj$@k|t$. For Best-of-N (BoN) sampling, we follow \cite{snell2024scalingllmtesttimecompute} and select answers with the highest score given by the reward model as the final output. When $t=1$, Pass$@k|t$ Maj$@k|t$, and BoN with temporal sampling are equivalent to the baseline settings that samples only on the final checkpoint.

\begin{figure}[!t]
    \centering
    \vspace{-1em}
    \includegraphics[width=0.85\textwidth]{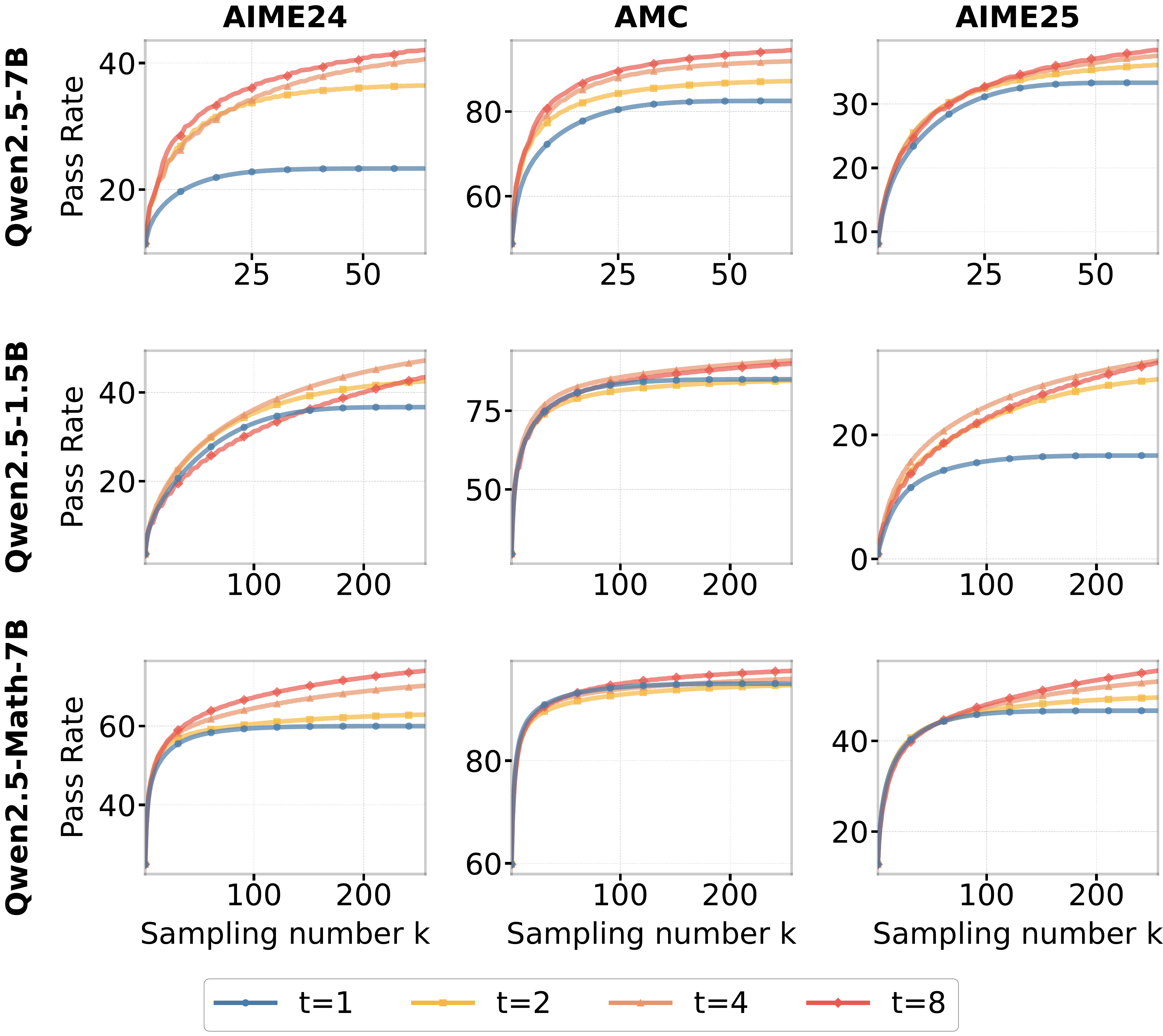}
    \caption{Pass$@k$ for different numbers of checkpoints $t$ on the AIME2024, AMC, and AIME2025 benchmarks when using Temporal Sampling. The case $t=1$ represents the baseline of standard $Pass@k$ sampling on the final checkpoint. Our proposed Temporal Sampling for Qwen2.5-7B with $t=8$ outperforms the baseline by more than 19, 13, and 4 percentage points on AIME2024, AMC, and AIME2025, respectively, when sampling 64 responses.}
    \label{fig:pass_rate_vs_sampling}
\end{figure}

\begin{figure}[!t]
    \centering
    \vspace{-1em}
    \includegraphics[width=0.9\textwidth]{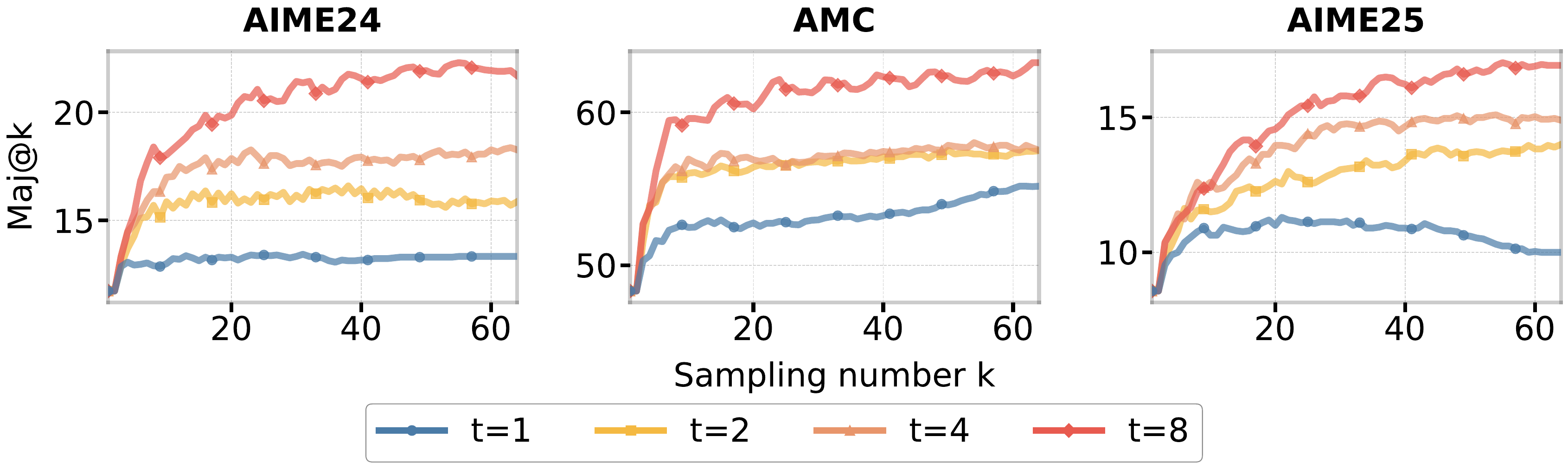}
    \caption{Maj$@k$ (Majority voting) for different numbers of checkpoints $t$ on the AIME2024, AMC, and AIME2025 benchmarks using Temporal Sampling. The case $t=1$ represents the baseline of standard majority voting sampling on the final checkpoint. Our proposed Temporal Sampling with $t=8$ checkpoints outperforms the baseline by more than 8, 7, and 7 percentage points on AIME2024, AMC, and AIME2025, respectively, when sampling 64 responses.}
    \label{fig:consistency_rate_vs_sampling}
    \vspace{-1em}
\end{figure}

\subsection{Temporal Sampling Achieves Higher Sampling Performance}
\label{subsec:efficiency_upper_bound} 

Figure \ref{fig:pass_rate_vs_sampling} demonstrates that Temporal Sampling achieves higher sampling performance (as measured by $Pass@k|t$) compared to the baseline of sampling only on the final checkpoint, under identical computational budgets. These advantages are consistently observed across the AIME2024, AIME2025, and AMC benchmarks. For instance $Pass@k|8$ of Qwen2.5-7B results in a pass rate that is over 19 percentage points higher than that of sampling only on the final checkpoint on AIME24 when $k=64$. 
The enhanced efficiency of Temporal Sampling is further highlighted by its ability to reach a 22.5\% pass rate with only $k=5$ samples, a level that requires $k=64$ samples for $t=1$. 

% \begin{AIbox}{Takeaway 3: Temporal Sampling Enhances Sampling Performance}
% Temporal Sampling achieves higher pass rates and greater sampling efficiency compared to conventional methods that sample only from the final checkpoint, given the same computational budget.
% \end{AIbox}
\begin{AIbox}{Takeaway 3: Improvement of Sampling Performance}
Temporal Sampling has higher pass rates than sampling only on the final checkpoint.
\end{AIbox}

\subsection{Temporal Sampling Improves Performance of Inference-Time Scaling}
\label{subsec:improves_inference_scaling}

Figure~\ref{fig:consistency_rate_vs_sampling} demonstrates that Temporal Sampling markedly enhances the performance of majority voting (measured by $Maj@k|t$). Across the AIME2024, AIME2025, and AMC benchmarks, employing a greater number of checkpoints ($t$) within the Temporal Sampling framework leads to improved accuracy compared to the baseline Maj$@k$ only sampling on the final checkpoint under identical computational budgets. Specifically, at $k=64$, $Maj@k|8$ achieves an accuracy exceeding 21, substantially outperforming the 13\% accuracy of the baseline.

\begin{figure}[!t]
    \centering
    \includegraphics[width=0.9\textwidth]{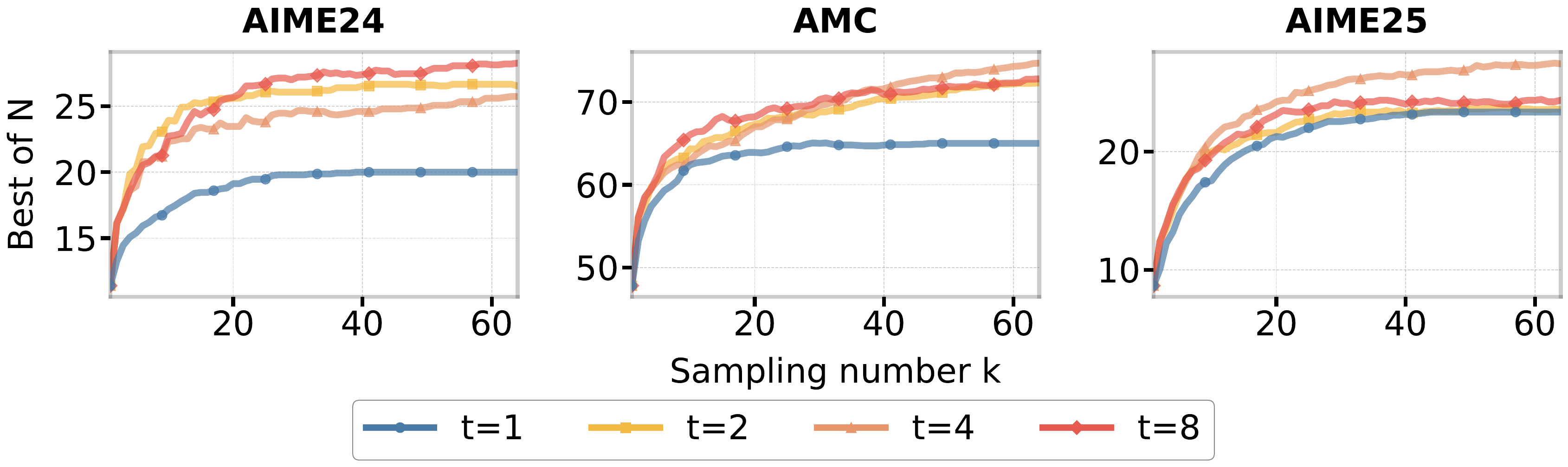}
    \caption{Best-of-N decoding on the AIME2024, AMC, and AIME2025 benchmarks using Temporal Sampling. The case $t=1$ represents the baseline of standard BoN sampling on the final checkpoint. Our proposed Temporal Sampling with $t=8$ checkpoints outperforms the baseline by more than 7, 8, and 1 percentage points on AIME2024, AMC, and AIME2025, respectively, when sampling 64 responses.}
    \label{fig:bon_at_t_avg_72B}
\end{figure}

Figure~\ref{fig:bon_at_t_avg_72B} demonstrates the effectiveness of Temporal Sampling when combined with Best-of-N (BoN) decoding on the AIME2024, AMC, and AIME2025 benchmarks. We use Qwen2.5-Math-PRM-72B following \cite{prmlessons} as the process reward model. The results clearly show that Temporal Sampling with $t=8$ checkpoints significantly outperforms the baseline ($t=1$), achieving improvements of more than 7, 8, and 1 percentage points across the three benchmarks when sampling $k=64$ responses. 
We present more results of Best-of-N sampling with different reward models in Appendix \ref{Temporal Sampling for Best-of-N}.

% \begin{AIbox}{Takeaway 4: Temporal Sampling Enhances Inference-Time Scaling Performance}
% Temporal Sampling consistently yields better results than relying solely on the final checkpoint when using inference-time scaling methods.
% \end{AIbox}

\begin{AIbox}{Takeaway 4: Improvement of Test-Time Scaling Performance}
Temporal Sampling has better test-time scaling performance than sampling only on the final checkpoint.
\end{AIbox}

\begin{figure}[!t]
    \centering
    \includegraphics[width=0.9\textwidth]{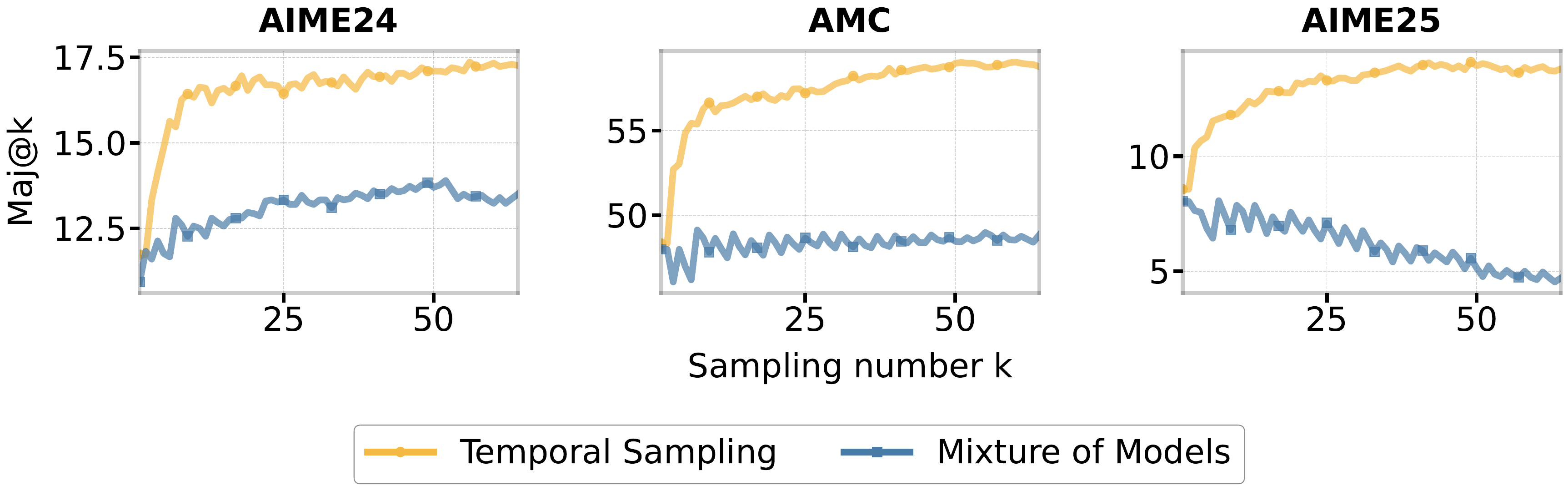}
    \caption{Maj$@k$ comparison between Temporal Sampling ($t=3$) and a Mixture of Models (MoM) approach on the AIME2024, AMC, and AIME2025 benchmarks. For MoM, the model pool included the Qwen2.5-7B-Base final RL checkpoint, Deepseek-Math-7B-Instruct, and Llama-3.1-8B-Instruct. Temporal Sampling outperforms the MoM approach by more than 4, 9, and 9 percentage points on AIME2024, AMC, and AIME2025, respectively, when sampling 64 responses.}
    \label{fig:consistency_rate_t3_comparison}
\end{figure}

\subsection{More Analysis}

We evaluate our proposed Temporal Sampling against the Mixture of Models, which combines outputs from different foundation models to answer each question collaboratively. To compare sampling efficiencies, we construct a model pool containing three models: our RL-trained final checkpoint (Qwen2.5-7B-Base), Llama 3.1-8B, and DeepSeek-Math-7B-Instruct. We apply Temporal Sampling (with \(t=3\)) and the mixture strategy by sampling in a round-robin manner over the pool, then measure the majority voting performance \(\mathrm{Maj}@k\). As shown in Figure~\ref{fig:consistency_rate_t3_comparison}, Temporal Sampling achieves higher sampling performance than the mixture of models under the same computational budget. At \(\mathrm{Maj}@64\), Temporal Sampling outperforms the mixture approach by over 4, 9, and 9 points on the AIME24, AMC, and AIME25 benchmarks, respectively.

 \begin{wrapfigure}{r}{0.6\textwidth}
  \centering
  % \vspace{-4em}
  \includegraphics[width=0.6\textwidth]{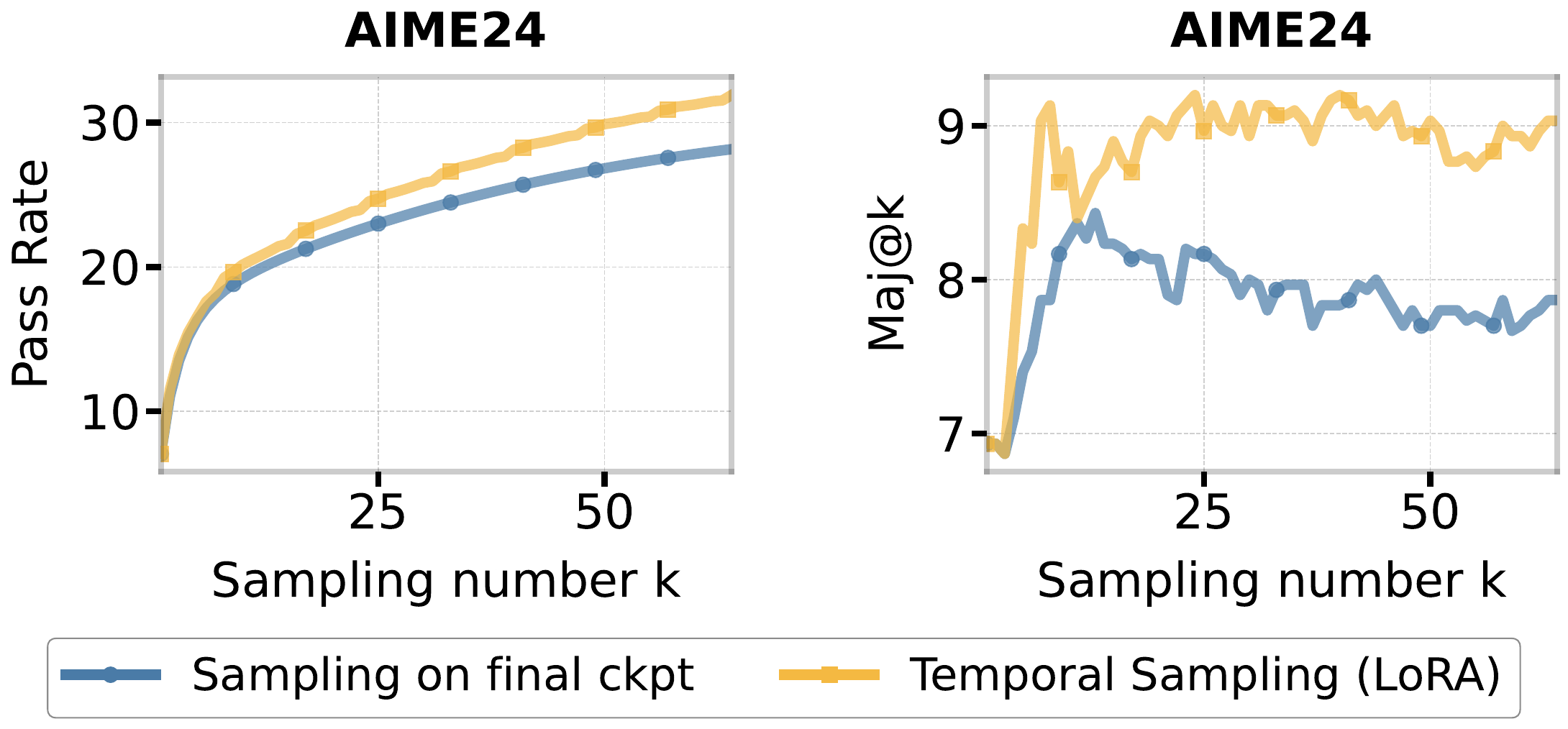}
  \caption{Performance of Temporal Sampling using 8 checkpoints from LoRA SFT of Qwen2.5-7B. Results on the AIME demonstrate that Temporal Sampling with  LoRA checkpoints surpasses the baseline (sampling only from the final checkpoint) for both $Pass@k$ and Maj$@k$.}
  \label{fig:lora_pass_and_consistency}
  % \vspace{-1em}
\end{wrapfigure}

\section{Temporal Sampling with LoRA Fine-tuning}

A key consideration for the practical application of Temporal Sampling is the storage cost associated with saving multiple model checkpoints. To address this, we investigated the use of Low-Rank Adaptation (LoRA) for Fine-Tuning, where checkpoints generated only store the low-rank adapter weights, smaller than full parameter fine-tuning. In our experiments, we use LoRA SFT Qwen2.5-7B model on the DeepscaleR-4k dataset used in Section \ref{sec:Learning_and_Forgetting_Dynamics_During_Post-Training}. Please see Appendix \ref{LoRA Fine-tuning Setup} for the details of the training parameters. We saved 8 LoRA checkpoints during the SFT process. We then evaluated the performance of Temporal Sampling using LoRA checkpoints, comparing its performance against a baseline that sampled only from the final checkpoint. The comparison was based on the $Pass@k$ and Maj$@k$ metrics on the AIME benchmark.

Our findings, illustrated in Figure \ref{fig:lora_pass_and_consistency}, reveal that Temporal Sampling implemented with LoRA checkpoints outperforms sampling only from the final checkpoint for both Pass@k and Maj@k. This demonstrates that the enhanced sampling performance of Temporal Sampling could be achieved with the considerably reduced storage footprint afforded by LoRA. This makes Temporal Sampling with LoRA a more resource-efficient approach for leveraging checkpoint diversity. Please see more experiments in Appendix \ref{More Results of Temporal Sampling with LoRA Fine-tuning}.

\section{Conclusion and Future Work}

In this paper, we observed the phenomenon of Temporal Forgetting: many correct solutions emerge transiently during training but are absent in the final model. Our analysis of training trajectories revealed significant forgetting dynamics, with model answers oscillating between correct and incorrect states across checkpoints. Inspired by this phenomenon, we propose Temporal Sampling, a simple inference-time method that samples from multiple training checkpoints to recover forgotten solutions. This approach consistently improves reasoning performance by 4-19 points in Pass@k across benchmarks and can be efficiently implemented using LoRA-adapted models. 

These findings suggest that true model competence may not reside in a single parameter snapshot, but rather in the collective dynamics of training itself. Temporal Sampling offers a practical and powerful way to reclaim lost reasoning ability, challenging the standard paradigm of using only the final model checkpoint for evaluation and deployment.

We will explore several promising directions as future work. Firstly, further reduce the storage costs of temporal scaling, particularly for reinforcement learning trajectories, such as RL LoRA fine-tuning. Secondly, investigating methods to transfer the performance gains from $Pass@k|t$ to $Pass@1|1$ is a promising avenue. Lastly, developing a more comprehensive theoretical framework for learning and forgetting dynamics could better explain the observed Temporal Forgetting phenomena during model training.

\section*{Acknowledgment}

This work is partially supported by the Office of Naval Research (ONR) under grant N0014-23-1-2386, the Air Force Office of Scientific Research (AFOSR) under grant FA9550-23-1-0208, and the National Science Foundation (NSF) AI Institute for Agent-based Cyber Threat Intelligence and Operation (ACTION) under grant IIS 2229876. Results presented in this paper were partially obtained using the Chameleon testbed \cite{keahey2020lessons} supported by the National Science Foundation.

This work is supported in part by funds provided by the National Science Foundation, Department of Homeland Security, and IBM. 
Any opinions, findings, and conclusions or recommendations expressed in this material are those of the author(s) and do not necessarily reflect the views of the NSF or its federal agency and industry partners.

\bibliographystyle{plain}
\bibliography{neurips_2025}

%%%%%%%%%%%%%%%%%%%%%%%%%%%%%%%%%%%%%%%%%%%%%%%%%%%%%%%%%%%%

\clearpage
\appendix

\section{Related Work}
\label{Related Work}

\paragraph{Reinforcement learning for LLM.} Reinforcement Learning (RL) has rapidly become a cornerstone for extending the capabilities of LLMs across various applications. Although it was first employed to align model behavior with human preferences through approaches like Reinforcement Learning from Human Feedback (RLHF) \cite{ouyang2022training}, its role now encompasses reasoning on complex tasks \cite{kimi2025k15,deepseekai2025r1,lambert2024tulu}. For example, DeepSeek-R1 applied RL directly to a base “zero” LLM \cite{deepseekai2025r1}, and Kimi K1.5 augmented this framework with multimodal reasoning and verbosity control \cite{kimi2025k15}. In particular, Reinforcement Learning has gained traction in areas such as mathematics and programming, where reward signals can be defined by clear, rule-based criteria like answer matching \cite{lambert2024tulu,shao2024deepseekmath,chen2021evaluating,deepseekai2025r1,feng2023alphazerolike,snell2024scaling,xie2024self,wan2024alphazero}. Advances in optimization, such as specialized PPO variants (e.g., VinePPO \cite{feng2023alphazerolike}) and stabilized GRPO algorithms (e.g., DAPO \cite{yu2025dapoopensourcellmreinforcement}), have simplified reward design, making RL more practical. Our work shifts focus from static performance gains of RL to the evolution of answer correctness over the procedure of RL training. We harness these temporal fluctuations as the diversity source to increase inference-time performance.

\paragraph{Inference Time Scaling.} Expanding the computational budget available during inference has become a powerful lever for squeezing extra performance out of large language models, giving rise to an ever-growing family of test-time scaling (TTS) techniques \cite{o1}. The field has seen a variety of approaches to leverage this. Established techniques include sampling-driven methods like majority voting \cite{Self-Consistency} or best-of-N \cite{sardana2023beyond}, which generate many candidate answers and select the most persuasive one. More intricate are search-based algorithms such as Tree-of-Thoughts (ToT) explorations \cite{ToT} and Monte-Carlo tree search (MCTS) \cite{xie2024self, ARGS, wan2024alphazero}. 
Such approaches often build upon the development of sophisticated verifiers and may integrate process-based reward signals directly into search methods \cite{MindStar, wu2024inference, snell2024scaling}. To further enhance efficiency and adaptiveness, other techniques include self-evaluation mechanisms for judicious compute allocation \cite{manvi2024adaptive} and diversity-aware search tactics, sometimes referred to as Test-Time Scaling (TTS) with diversity, to reduce redundant sampling and explore a wider solution space \cite{huggingface2024scaling}.

\paragraph{Learning Dynamics.} Learning dynamics analyze model behavior during training, such as explaining ``aha moments" \cite{deepseekai2025r1}, and challenges in fine-tuning generalization (e.g., \cite{kumar2022finetuningdistortpretrainedfeatures, ren2023preparetaskheadfinetuning}). These works focus on the training process itself and offer novel perspectives on how models learn and develop capabilities. Other research analyzes the step-wise decomposition of how influence accumulates among different potential responses for both instruction and preference tuning in LLMs \cite{ren2025learningdynamicsllmfinetuning}. This detailed analytical framework, offering hypothetical explanations for why specific types of hallucination are strengthened post-finetuning. From the data perspective, Training Data Attribution (TDA) \cite{bae2024trainingdataattributionapproximate} identifies influential training examples to explain model predictions. Orthogonal to these works, we empirically investigate the dynamic fluctuations in answer correctness across diverse reasoning tasks, and harness the learning dynamics as a source of answer diversity to widen the sampling space and performance.

\section{Limitations and Broader Impacts}
\label{Limitations and Broader Impacts}

Our investigation into the \textbf{Temporal Forgetting} phenomenon has primarily concentrated on mathematical reasoning tasks. We have not yet extended our analysis to other potentially relevant domains where similar patterns might emerge, such as automated theorem proving~\cite{xin2024deepseek}, healthcare applications~\cite{lai2025medr1reinforcementlearninggeneralizable}, or code generation~\cite{wei2025swe}. The experimental foundation of our work focuses on GRPO~\cite{shao2024deepseekmath} and SFT frameworks. While we believe our findings can generalize to other training methodologies, including on-policy approaches like PPO~\cite{schulman2017proximalpolicyoptimizationalgorithms}, RLOO~\cite{huang2024putting}, and DAPO~\cite{yu2025dapoopensourcellmreinforcement}, as well as off-policy techniques such as DPO~\cite{rafailov2024directpreferenceoptimizationlanguage}, RAFT~\cite{dong2023raftrewardrankedfinetuning}, and Reinforce-Rej~\cite{xiong2025minimalistapproachllmreasoning} that rely on rejection sampling. we have not empirically validated this hypothesis.

When implementing Temporal Sampling, we focus on round-robin allocation strategies for distributing the $k$ sampling attempts across $t$ checkpoints. Alternative distribution approaches represent a promising avenue that we reserve for subsequent research.

\textbf{Broader Impacts.} Through our research, we have uncovered the temporal forgetting phenomenon and developed temporal sampling as an effective method to enhance inference-time sampling performance in mathematical reasoning. We have not identified negative societal implications associated with this work.

\section{Proof of Unbiased Estimation}
\label{Proof of Unbiased Estimation}

We provide a formal proof that our proposed estimator for Pass@$k|t$ is unbiased. The Pass@$k|t$ metric measures the probability of obtaining at least one correct answer when samples are drawn from multiple checkpoints in a round-robin manner. The following proof establishes the statistical validity of our evaluation framework, ensuring that our empirical measurements accurately reflect the true performance of \textbf{Temporal Sampling} across different checkpoints.

\begin{theorem}
Denote $r_{i,j}$ as the Pass@1 rate for the $j$-th checkpoint on problem $i$, $C_{i,j}$ as the number of correct samples among $N$ candidates for problem $i$ from checkpoint $j$. 
Let $$P_{i} = 1 - \prod_{j=1}^{t} (1-r_{i,j})^{k_j}$$ denote the probability of obtaining at least one correct answer when $k$ samples are drawn from $t$ checkpoints for problem $i$, (i.e., Pass@$k|t$), where $k_j$ is determined by the balanced integer partition of $k$ on $t$:
\[ k_j = \begin{cases} \lfloor k/t \rfloor + 1 & \text{if } j \le (k \pmod t) \\ \lfloor k/t \rfloor & \text{if } j > (k \pmod t) \end{cases} \]

We have
\[
\hat{P}_{i} = 1 - \prod_{j=1}^{t} \left( \frac{\binom{N - C_{i,j}}{k_j}}{\binom{N}{k_j}} \right)
\]
is an unbiased estimator of $P_{i}$, i.e., ${\mathbb{E}}[\hat{P}_{i}]= P_i $.
\end{theorem}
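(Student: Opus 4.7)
The plan is to reduce the multi-checkpoint statement to a single-checkpoint lemma, then leverage independence across checkpoints. Concretely, I would first prove the auxiliary identity
\[
\mathbb{E}\!\left[\frac{\binom{N-C_{i,j}}{k_j}}{\binom{N}{k_j}}\right] \;=\; (1-r_{i,j})^{k_j},
\]
where the expectation is taken over the $N$ i.i.d.\ candidate samples drawn from checkpoint $j$ (each correct with probability $r_{i,j}$, so $C_{i,j} \sim \mathrm{Binomial}(N, r_{i,j})$). Once this is in hand, the rest of the argument is essentially bookkeeping.

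For the auxiliary identity my preferred approach is a short probabilistic (double-counting) argument rather than an algebraic expansion of the binomial sum. Fix checkpoint $j$ and imagine additionally choosing a uniformly random size-$k_j$ subset $S \subseteq \{1,\dots,N\}$ of the candidate indices, independently of the $X_\ell$'s. Consider the event $E = \{$all samples indexed by $S$ are incorrect$\}$. Conditioning on the candidates first and averaging over $S$ gives $\Pr(E) = \mathbb{E}[\binom{N-C_{i,j}}{k_j}/\binom{N}{k_j}]$, because for a fixed realization with $C_{i,j}$ correct samples, $\binom{N-C_{i,j}}{k_j}$ of the $\binom{N}{k_j}$ size-$k_j$ subsets consist entirely of incorrect samples. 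Conditioning on $S$ first and averaging over the candidates gives $\Pr(E) = (1-r_{i,j})^{k_j}$ by independence of the $X_\ell$'s. Equating the two expressions proves the identity. (An equivalent algebraic route is to use the standard swap $\binom{N}{c}\binom{N-c}{k_j} = \binom{N}{k_j}\binom{N-k_j}{c}$ inside the binomial sum, collapsing it via the binomial theorem; I would mention this as an alternative.)

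Having established the per-checkpoint identity, I would then write
\[
\mathbb{E}[\hat{P}_i] \;=\; 1 - \mathbb{E}\!\left[\prod_{j=1}^{t}\frac{\binom{N-C_{i,j}}{k_j}}{\binom{N}{k_j}}\right],
\]
use the fact that the samples drawn from distinct checkpoints are generated independently (so the random variables $C_{i,1},\dots,C_{i,t}$ are mutually independent), and factor the expectation as a product. Applying the auxiliary identity to each factor yields
\[
\mathbb{E}[\hat{P}_i] \;=\; 1 - \prod_{j=1}^{t}(1-r_{i,j})^{k_j} \;=\; P_i,
\]
which is the claim. Linearity then extends this to the benchmark-level expectation over problems $i$ if desired.

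The main obstacle is really just the auxiliary identity, since the factorization step is automatic from independence. The identity itself is not hard, but care is needed on two points: (i) handling the boundary cases $k_j > N - C_{i,j}$ (where the binomial coefficient is zero, correctly encoding the impossibility of drawing $k_j$ incorrect samples) and the requirement $N \ge k_j$, and (ii) cleanly articulating the independence assumption across checkpoints, which is implicit in the theorem's setup but should be stated explicitly. I would also briefly flag that the identity implicitly reconciles the with-replacement flavor of the definition $(1-r_{i,j})^{k_j}$ with the without-replacement combinatorial form of the estimator, which is exactly what makes the double-counting proof so clean.
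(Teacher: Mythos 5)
Your proposal is correct, and its overall skeleton matches the paper's proof exactly: reduce to the per-checkpoint identity $\mathbb{E}\bigl[\binom{N-C_{i,j}}{k_j}/\binom{N}{k_j}\bigr]=(1-r_{i,j})^{k_j}$, then factor the expectation of the product using independence of $C_{i,1},\dots,C_{i,t}$ across checkpoints, and subtract from $1$. Where you genuinely diverge is in how you prove the auxiliary identity. The paper expands the expectation as a binomial sum over $c$, applies the coefficient swap $\binom{N-c}{k_j}\binom{N}{c}/\binom{N}{k_j}=\binom{N-k_j}{c}$, and collapses the remaining sum with the binomial theorem. You instead introduce an auxiliary uniformly random size-$k_j$ subset $S\subseteq\{1,\dots,N\}$ independent of the samples, and compute $\Pr(\text{all samples in }S\text{ incorrect})$ in two ways by Fubini: conditioning on the sample outcomes first yields the hypergeometric ratio, conditioning on $S$ first yields $(1-r_{i,j})^{k_j}$. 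This double-counting argument is more conceptual and makes it transparent why a without-replacement estimator has a with-replacement expectation, while the paper's algebraic route is more self-contained and mechanical; you sensibly note the algebraic swap as an equivalent path. You also explicitly flag two assumptions the paper leaves implicit: independence of the sample draws across checkpoints, and the boundary behavior when $k_j>N-C_{i,j}$ (where $\binom{N-C_{i,j}}{k_j}=0$). Both versions are valid; yours arguably gives a cleaner explanation of the identity at the cost of introducing the auxiliary randomization.
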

\begin{proof}
For a single checkpoint $j$ on problem $i$, we consider the probability of obtaining no correct solutions when sampling $k_j$ solutions without replacement from $N$ total samples. Given that $C_{i,j}$ of these $N$ samples are correct, this probability follows the hypergeometric distribution:
$$P(X_{i,j}=0) = \frac{\binom{N-C_{i,j}}{k_j}}{\binom{N}{k_j}}$$
For Pass@$k|t$, we succeed if at least one sample across all checkpoints is correct. The probability of failure (no correct solutions from any checkpoint) is:
$$P(\text{failure}) = \prod_{j=1}^{t} P(X_{i,j}=0) = \prod_{j=1}^{t} \frac{\binom{N-C_{i,j}}{k_j}}{\binom{N}{k_j}}$$
Thus, our estimator for the success probability is:
$$\hat{P}_{i} = 1 - \prod_{j=1}^{t} \frac{\binom{N-C_{i,j}}{k_j}}{\binom{N}{k_j}}$$
To prove this estimator is unbiased, we need to show that $\mathbb{E}[\hat{P}_{i}] = P_i$. We first prove that: 

$${\mathbb{E}}\left[\frac{\binom{N-C_{i,j}}{k_j}}{\binom{N}{k_j}}\right] = (1-r_{i,j})^{k_j}$$
Since $C_{i,j}$ follows a binomial distribution $B(N, r_{i,j})$, we have:
\begin{align}
{\mathbb{E}}\left[\frac{\binom{N-C_{i,j}}{k_j}}{\binom{N}{k_j}}\right] &= \sum_{c=0}^{N} \frac{\binom{N-c}{k_j}}{\binom{N}{k_j}} \cdot \binom{N}{c}r_{i,j}^{c}(1-r_{i,j})^{N-c}
\end{align}
We can simplify the coefficient:
\begin{align}
\frac{\binom{N-c}{k_j}}{\binom{N}{k_j}} \cdot \binom{N}{c} &= \frac{(N-c)!}{k_j!(N-c-k_j)!} \cdot \frac{k_j!(N-k_j)!}{N!} \cdot \frac{N!}{c!(N-c)!} \\
& = \binom{N-k_j}{c}
\end{align}
Substituting this back:
\begin{align}
{\mathbb{E}}\left[\frac{\binom{N-C_{i,j}}{k_j}}{\binom{N}{k_j}}\right] &= \sum_{c=0}^{N-k_j} \binom{N-k_j}{c}r_{i,j}^{c}(1-r_{i,j})^{N-c} \\
&= (1-r_{i,j})^{k_j} \sum_{c=0}^{N-k_j} \binom{N-k_j}{c}r_{i,j}^{c}(1-r_{i,j})^{N-k_j-c}
\end{align}
The summation represents the binomial expansion of $(r_{i,j} + (1-r_{i,j}))^{N-k_j} = 1^{N-k_j} = 1$, yielding:
\begin{align}
{\mathbb{E}}\left[\frac{\binom{N-C_{i,j}}{k_j}}{\binom{N}{k_j}}\right] = (1-r_{i,j})^{k_j}
\end{align}
Since the samples from different checkpoints are independent, we have:
\begin{align}
{\mathbb{E}}\left[\prod_{j=1}^{t} \frac{\binom{N-C_{i,j}}{k_j}}{\binom{N}{k_j}}\right] &= \prod_{j=1}^{t} E\left[\frac{\binom{N-C_{i,j}}{k_j}}{\binom{N}{k_j}}\right] = \prod_{j=1}^{t} (1-r_{i,j})^{k_j}
\end{align}
Therefore:
\begin{align}
{\mathbb{E}}[\hat{P}_{i}] &= 1 - E\left[\prod_{j=1}^{t} \frac{\binom{N-C_{i,j}}{k_j}}{\binom{N}{k_j}}\right] = 1 - \prod_{j=1}^{t} (1-r_{i,j})^{k_j} = P_i
\end{align}
This proves that $\hat{P}_{i}$ is an unbiased estimator for Pass@$k|t$.
\end{proof}

\section{Experiment Setup}
\label{Detailed Experiment Setup}

\subsection{GRPO}
\label{Grpo Setup}

We follow \cite{deepscaler2025} and use the following hyper-parameters detailed in Table \ref{tab: rl hyperparameters} for Zero RL training. We perform experiments on eight A100 GPUs. The model is trained using VERL \cite{sheng2024hybridflow}. 

\begin{table}[htbp]
\small
\centering
\vspace{1em}
\caption{This table shows the hyper-parameters for zero RL training.}
\begin{tabular}{ll}
\toprule
\textbf{Hyper-parameter} & \textbf{Value} \\ \midrule
Learning Rate & $1 \times 10^{-6}$ \\
Number of Epochs & $9$ \\
Number of Devices & $8$ \\
Rollout Batch Size & $128$ \\
PPO Mini Batch Size & $64$ \\
Max Prompt Length & $1024$ \\
Max Response Length & $3072$ (\textsc{Qwen2.5-Math-7B}), $8192$ (\textsc{Others}) \\
KL Coefficient & $0.001$ \\
Rollout Engine & $\textsc{vllm (v0.8.2)}$ \\
Optimizer & \texttt{Adamw} \\
Learning Rate Scheduler & \texttt{cosine} \\
Warmup Ratio & $0.1$ \\
\bottomrule
\end{tabular}
\label{tab: rl hyperparameters}
\end{table}

\subsection{Supervised Fine-tuning and LoRA Fine-tuning}
\label{SFT Setup}

Our model SFT is conducted using LLaMA-Factory \citep{zheng2024llamafactory}, on a server with four NVIDIA A100-SXM4-80GB GPUs. We follow \cite{sky_t1_2025} for the training parameters.
Table \ref{tab: training-hyperparameters} lists hyper-parameters for full parameter supervised fine-tuning.

\begin{table}[!h]
\small
\centering
\caption{This table shows the hyper-parameters for full parameter supervised fine-tuning.}
\resizebox{0.4\columnwidth}{!}{
\begin{tabular}{ll}
\toprule
\textbf{Hyper-parameter} & \textbf{Value} \\ \midrule
Learning Rate & $1 \times 10^{-5}$ \\
Number of Epochs & $3$ \\
Number of Devices & $4$ \\
Per-device Batch Size & $1$ \\
% Gradient Accumulation Steps & $1$ \\
% Effective Batch Size & $32$ \\
Optimizer & \texttt{Adamw} \\
Learning Rate Scheduler & \texttt{cosine} \\
% Warmup Steps & $100$ \\
Max Sequence Length  & $16384$ \\ \bottomrule
\end{tabular}
}
\label{tab: training-hyperparameters}
\end{table}

\subsection{LoRA Fine-tuning Setup}
\label{LoRA Fine-tuning Setup}

Our model LoRA fine-tuning \cite{hu2021loralowrankadaptationlarge} is conducted using LLaMA-Factory \citep{zheng2024llamafactory}, on a server with four NVIDIA A100-SXM4-80GB GPUs. We follow \cite{sky_t1_2025} for the training parameters.
Table \ref{tab: training-lora-hyperparameters} lists hyper-parameters for LoRA fine-tuning.

\begin{table}[!h]
\small
\centering
\caption{This table shows the hyper-parameters for LoRA fine-tuning.}
\resizebox{0.4\columnwidth}{!}{
\begin{tabular}{ll}
\toprule
\textbf{Hyper-parameter} & \textbf{Value} \\ \midrule
Learning Rate & $1 \times 10^{-4}$ \\
Number of Epochs & $3$ \\
Number of Devices & $4$ \\
Per-device Batch Size & $1$ \\
LoRA Target & \texttt{full} \\
% Gradient Accumulation Steps & $1$ \\
% Effective Batch Size & $32$ \\
% Optimizer & \texttt{Adamw} \\
Learning Rate Scheduler & \texttt{cosine} \\
Warmup Ratio & $0.03$ \\
Max Sequence Length  & $16384$ \\ \bottomrule
\end{tabular}
}
\label{tab: training-lora-hyperparameters}
\end{table}

\section{More Experiment results}
\subsection{Temporal Sampling for Best-of-N}
\label{Temporal Sampling for Best-of-N}
\begin{figure}[!t]
    \centering
    \includegraphics[width=\textwidth]{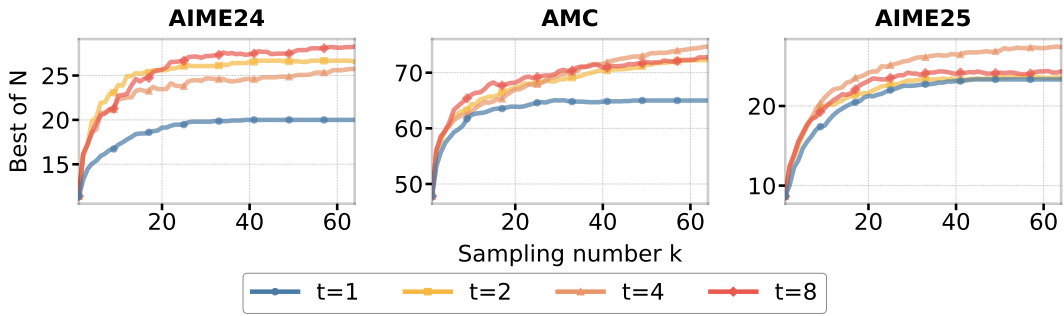}
    \caption{BoN (Best-of-N) decoding on the AIME2024, AMC, and AIME2025 benchmarks using Temporal Sampling.  Qwen2.5-Math-PRM-72B is used as the process reward model. We choose the answer with the highest reward as the final answer. The case $t=1$ represents the baseline of standard BoN on the final checkpoint. Our proposed Temporal Sampling with $t=8$ checkpoints outperforms the baseline by more than 7, 8, and 1 percentage points on AIME2024, AMC, and AIME2025, respectively, when sampling 64 responses.}
    \label{fig:bon_at_t_avg_72B_appendix}
\end{figure}

Figure~\ref{fig:bon_at_t_avg_72B_appendix} demonstrates the effectiveness of Temporal Sampling when combined with Best-of-N (BoN) decoding on the AIME2024, AMC, and AIME2025 benchmarks. Using Qwen2.5-Math-PRM-72B \cite{prmlessons} as the process reward model, answers with the highest reward were selected as the final output. The results clearly show that Temporal Sampling with $t=8$ checkpoints significantly outperforms the baseline ($t=1$), achieving improvements of more than 7, 8, and 1 percentage points across the three benchmarks when sampling $k=64$ responses. Figure~\ref{fig:bon_at_t_avg_7B} presents additional evidence for the effectiveness of Temporal Sampling with Best-of-N decoding when using the smaller Qwen2.5-Math-PRM-7B \cite{prmlessons} as the process reward model. This highlights the value of leveraging multiple training checkpoints for enhancing reward-based selection methods.

\begin{figure}[!t]
    \centering
    \includegraphics[width=\textwidth]{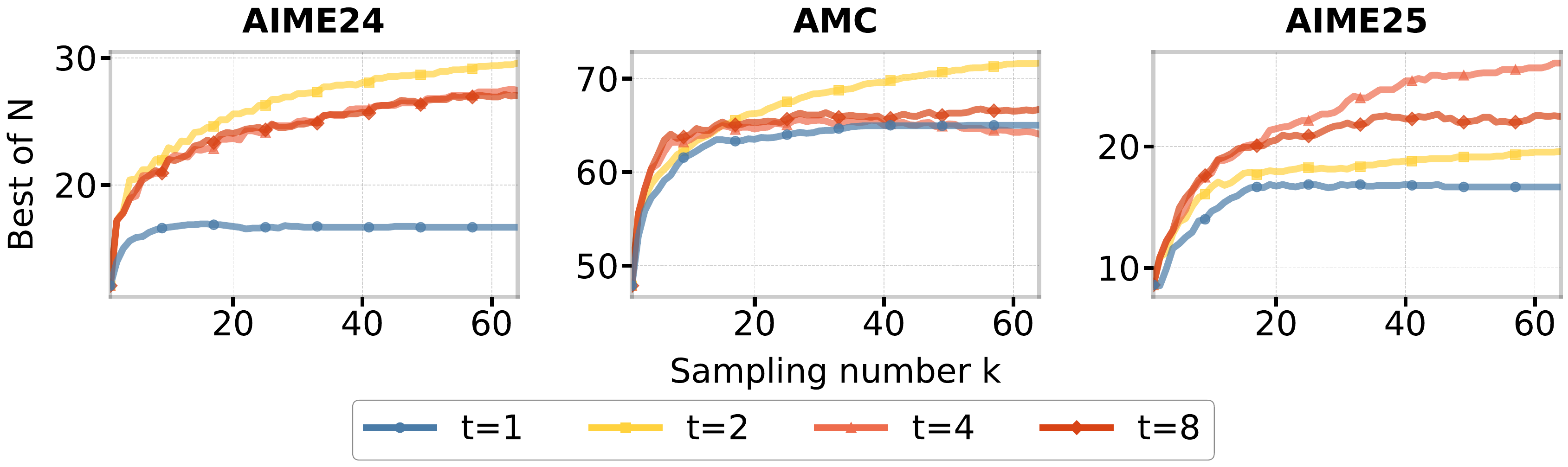}
    \caption{BoN (Best-of-N) decoding on the AIME2024, AMC, and AIME2025 benchmarks using Temporal Sampling.  Qwen2.5-Math-PRM-7B is used as the process reward model. We choose the answer with the highest reward as the final answer. The case $t=1$ represents the baseline of standard BoN on the final checkpoint. Our proposed Temporal Sampling with $t=8$ checkpoints outperforms the baseline by more than 10, 2, and 5 percentage points on AIME2024, AMC, and AIME2025, respectively, when sampling 64 responses.}
    \label{fig:bon_at_t_avg_7B}
\end{figure}

\subsection{More Results of Temporal Forgetting}
\label{More Results of Implicit Forgetting}

\begin{table}[t]
\centering
\small
\caption{Full list of SOTA models evaluated in Table \ref{tab:forgetting_comparison} and their corresponding base models. }
\begin{tabular}{ll}
\toprule
\textbf{Model} & \textbf{Based on} \\
\midrule
 DeepScaleR-1.5B & Distill-R1-1.5B \\
Still-1.5B & Distill-R1-1.5B \\
S1.1-1.5B & Qwen2.5-1.5B-Instruct \\
II-thought-1.5B-preview & Distill-R1-1.5B \\
\midrule
S1.1-3B & Qwen2.5-3B-Instruct \\
SmallThinker-3B & Qwen2.5-3B-Instruct \\
\midrule
S1.1-7B & Qwen2.5-7B-Instruct \\
OpenR1-Qwen-7B & Qwen2.5-Math-7B-Instruct \\
OpenThinker-7B & Qwen2.5-7B-Instruct \\
\midrule
s1-32B & Qwen2.5-32B-Instruct \\
Sky-T1-32B-Preview & Qwen2.5-32B-Instruct \\
Bespoke-Stratos-32B & Qwen2.5-32B-Instruct \\
OpenThinker-32B & Qwen2.5-32B-Instruct \\
\bottomrule
\end{tabular}
\label{tab:model_base_comparison}
\end{table}

Table~\ref{tab:model_base_comparison} provides a comprehensive list of the SOTA models evaluated in Table \ref{tab:forgetting_comparison} along with their corresponding base models.

\begin{figure}[!t]
    \centering
    \includegraphics[width=0.7\textwidth]{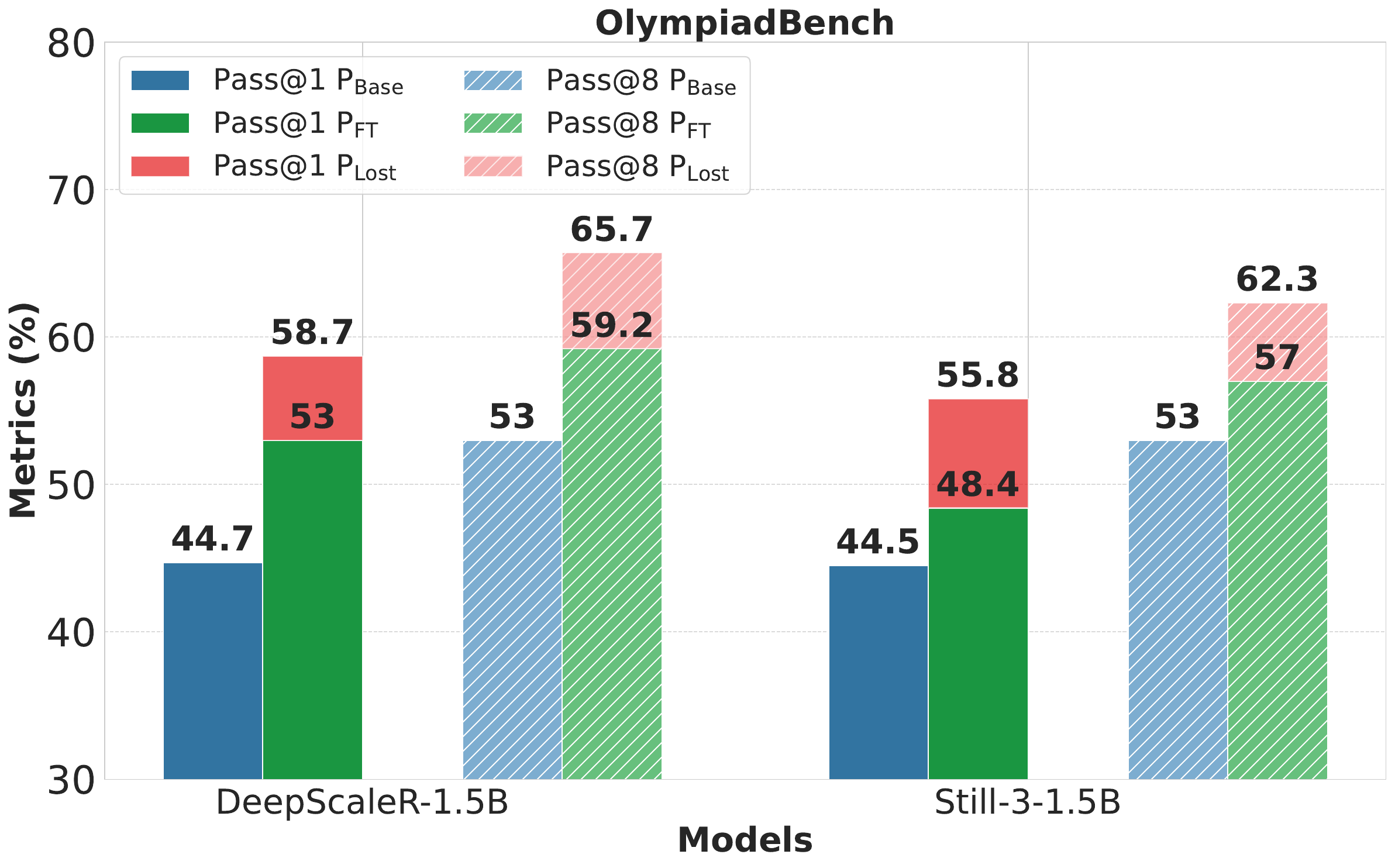}
    \caption{Performance of the base model ($P_{\text{Base}} \uparrow$), the fine-tuned model ($P_{\text{FT}} \uparrow$) and the Lost Score ($P_{Lost} \downarrow$) for Pass@1 sampling and Pass@8 sampling. Fine-tuned models like DeepscaleR-1.5B \cite{deepscaler2025} and Still-3-1.5B \cite{openr1} outperform the base model overall but also forget many questions the base model answered correctly.}
    \label{fig:forget_lost_score_appendix}
\end{figure}

Figure~\ref{fig:forget_lost_score_appendix} illustrates the performance comparison between base models and fine-tuned models using both Pass@1 and Pass@8 sampling on the OlympiadBench dataset. The figure shows that while fine-tuned models like DeepscaleR-1.5B and Still-3-1.5B achieve higher overall performance than their base models ($P_{\text{FT}} > P_{\text{Base}}$), they also exhibit the temporal forgetting phenomenon with substantial Lost Scores ($P_{Lost}$) for both Pass@1 sampling and Pass@8 sampling. 

\subsection{More Results of Forgetting Dynamics}
\label{More Results of Forgetting Dynamics}
\begin{table}[t]
\centering
\small
\caption{Performance of fine-tuned models ($P_{FT} \uparrow$), the Ever Correct Score ($P_{ECS} \uparrow$), and the Temporal Forgetting Score ($P_{TFS} \downarrow$) of different fine-tuned models evaluated on AIME24 and AMC. We observed both high $P_{ECS}$ and $P_{TFS}$ in spite of the improving overall performance, which implies a high percentage of questions (from 6.7\% to 30\%) are answered correctly at some checkpoint during training but are ultimately incorrect in the final checkpoint.}
\begin{tabular}{lcccccccc}
\toprule
\multirow{2}{*}{Model}
& \multicolumn{4}{c}{AMC} 
& \multicolumn{4}{c}{AIME24} \\
\cmidrule(lr){2-5} \cmidrule(lr){6-9}
& $P_{\text{Base}}$ & $P_{\text{FT}}$ & $P_{\text{ECS}}$ & $P_{\text{TFS}}$
& $P_{\text{Base}}$ & $P_{\text{FT}}$ & $P_{\text{ECS}}$ & $P_{\text{TFS}}$ \\
\midrule
\rowcolor{gray!7} Qwen2.5-7B (GRPO)
  & 32.5 & 47.5 & 77.5 & 30.0
  & 6.7 & 6.7 & 23.4 & 16.7 \\
Qwen2.5-7B (SFT)
  & 32.5 & 52.5 & 75.0 & 22.5
  & 6.7 & 10.0 & 20.0 & 10.0 \\
\midrule
\rowcolor{gray!7} Qwen2.5-1.5B (GRPO)
  & 0.0 & 30.0 & 45.0 & 15.0
  & 0.0 & 3.3 & 10.0 & 6.7 \\
Qwen2.5-1.5B (SFT)
  & 0.0 & 15.0 & 35.0 & 20.0
  & 0.0 & 0.0 & 6.7 & 6.7 \\
\midrule
\rowcolor{gray!7} Qwen2.5-Math-7B (GRPO)
  & 32.5 & 72.5 & 82.5 & 10.0
  & 13.3 & 16.7 & 40.0 & 23.3 \\
Qwen2.5-Math-7B (SFT)
  & 32.5 & 50.0 & 75.0 & 25.0
  & 13.3 & 20.0 & 40.0 & 20.0 \\
\bottomrule
\end{tabular}
\label{tab:amc_aime_detailed}
\end{table}

Table~\ref{tab:amc_aime_detailed} presents detailed performance metrics for different fine-tuned models evaluated specifically on AIME24 and AMC benchmarks. The table shows the base model performance ($P_{\text{Base}}$), fine-tuned model performance ($P_{\text{FT}}$), Ever Correct Score ($P_{ECS}$), and Temporal Forgetting Score ($P_{TFS}$) across various models with both GRPO and SFT training methods. Notably, models exhibit significant temporal forgetting, with $P_{TFS}$ values ranging from 6.7\% to 30\%, which implies that many questions solved correctly at some point during training were ultimately answered incorrectly in the final checkpoint.

\begin{table}[t]
\centering
\small
\caption{Detailed performance score of base models ($P_{Base}$) and fine-tuned models ($P_{FT}$) across five mathematical benchmarks, served as complementary of Table \ref{tab:plost_comparison}.}
\begin{tabular}{lcccccccccc}
\toprule
\multirow{2}{*}{Model} 
& \multicolumn{2}{c}{Olympiad} 
& \multicolumn{2}{c}{MATH-500} 
& \multicolumn{2}{c}{GPQA} 
& \multicolumn{2}{c}{AMC} 
& \multicolumn{2}{c}{AIME} \\
\cmidrule(lr){2-3} \cmidrule(lr){4-5} \cmidrule(lr){6-7} \cmidrule(lr){8-9} \cmidrule(lr){10-11}
& $P_{\text{Base}}$ & $P_{\text{FT}}$ 
& $P_{\text{Base}}$ & $P_{\text{FT}}$ 
& $P_{\text{Base}}$ & $P_{\text{FT}}$ 
& $P_{\text{Base}}$ & $P_{\text{FT}}$ 
& $P_{\text{Base}}$ & $P_{\text{FT}}$ \\
\midrule
\rowcolor{gray!7} Qwen2.5-7B (GRPO)
  & 22.1 & 39.7
  & 53.2 & 73.8
  & 29.8 & 33.8
  & 32.5 & 47.5
  & 6.7 & 6.7 \\
Qwen2.5-7B (SFT)
  & 22.1 & 40.1
  & 53.2 & 69.8
  & 29.8 & 25.3
  & 32.5 & 52.5
  & 6.7 & 10.0 \\
\midrule
\rowcolor{gray!7} Qwen2.5-1.5B (GRPO)
  & 0.6 & 18.8
  & 0.6 & 55.6
  & 3.0 & 26.8
  & 0.0 & 30.0
  & 0.0 & 3.3 \\
Qwen2.5-1.5B (SFT)
  & 0.6 & 11.0
  & 0.6 & 36.2
  & 3.0 & 13.1
  & 0.0 & 15.0
  & 0.0 & 0.0 \\
\midrule
\rowcolor{gray!7} Qwen2.5-Math-7B (GRPO)
  & 19.3 & 41.0
  & 60.2 & 79.8
  & 30.3 & 32.8
  & 32.5 & 72.5
  & 13.3 & 16.7 \\
Qwen2.5-Math-7B (SFT)
  & 19.3 & 43.9
  & 60.2 & 76.4
  & 30.3 & 30.8
  & 32.5 & 50.0
  & 13.3 & 20.0 \\
\bottomrule
\end{tabular}
\label{tab:base_ft_comparison}
\end{table}

Table~\ref{tab:base_ft_comparison} complements Table \ref{tab:plost_comparison} by providing a more comprehensive view of base model ($P_{\text{Base}}$) and fine-tuned model ($P_{\text{FT}}$) performance across all five mathematical benchmar.

\subsection{More Results of Temporal Sampling with LoRA Fine-tuning}
\label{More Results of Temporal Sampling with LoRA Fine-tuning}
\begin{figure}[!t]
    \centering
    \includegraphics[width=\textwidth]{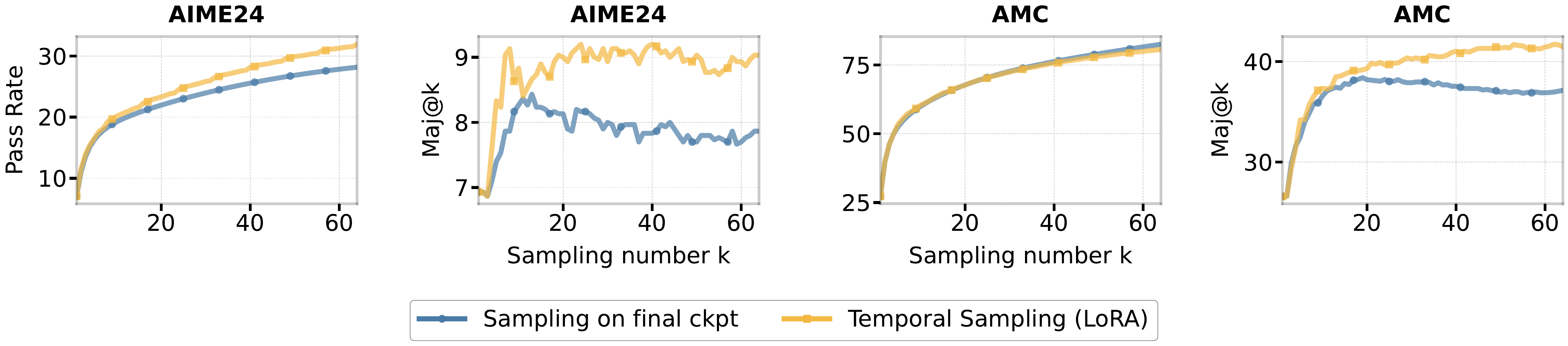}
    \caption{Performance of Temporal Sampling using 8 checkpoints from LoRA SFT of Qwen2.5-7B. Results on the AIME24 and AMC demonstrate that Temporal Sampling with  LoRA checkpoints surpasses the baseline (sampling only from the final checkpoint) for both $Pass@k$ and Maj$@k$.}
    \label{fig:lora_pass_and_consistency_combined_appendix}
\end{figure}

Figure~\ref{fig:lora_pass_and_consistency_combined_appendix} demonstrates evaluation results for AIME24 and AMC for the LoRA implementation of Temporal Sampling. The figure demonstrates that Temporal Sampling with LoRA checkpoints outperforms sampling only from the final checkpoint (baseline) for both $Pass@k$ and $Maj@k$ metrics.

\end{document}